\theoremstyle{plain}
\newtheorem{theorem}{Theorem}[section]
\newtheorem{lemma}[theorem]{Lemma}
\theoremstyle{definition}
\newtheorem{definition}[theorem]{Definition}
\theoremstyle{remark}
\newcommand{\rA}{\textrm{(A)}}
\newcommand{\rB}{\textrm{(B)}}
\newcommand{\rC}{\textrm{(C)}}
\def\reals{{\mathcal R}}
\newcommand{\D}{\mathcal{D}}
\newcommand{\K}{\mathcal{K}}
\newcommand{\R}{\mathcal{R}}
\newcommand{\Bcal}{\mathcal{B}}
\newcommand{\Ncal}{\mathcal{N}}
\newcommand{\ignore}[1]{}
\def\reals{{\mathbb R}}
\def\bold0{\mathbf{0}}
\newcommand{\eps}{\varepsilon}
\newcommand{\A}{\mathcal{A}}
\newcommand{\mO}{\mathcal{O}}
\newcommand{\F}{\mathcal{F}}
\newcommand{\tg}{\tilde{g}}
\newcommand{\bg}{\bar{g}}
\newcommand{\Z}{\mathcal{Z}}
\newcommand{\tsigma}{\tilde{\sigma}}
\newcommand{\tq}{\tilde{q}}
\newcommand{\PS}{\mathcal{PS}}
\newcommand{\bs}{\Bar{s}}
\newcommand{\err}{\textbf{err}}
\newcommand{\supp}[1]{\textbf{Supp}\{#1\}}
\newcommand{\sigmal}{\sigma_L}
\newcommand{\Exp}[1]{\mathbb{E}#1}
\newcommand{\ExpB}[1]{\mathbb{E}\left[#1\right]}
\newcommand{\condExp}[2]{\mathbb{E}_{#2}\left[#1\right]}
\newcommand{\prob}[1]{\mathbb{P}\left\{#1\right\}}
\newcommand{\norm}[1]{\left\|#1\right\|}
\newcommand{\normsq}[1]{\left\|#1\right\|^2}
\newcommand{\dotprod}[2]{\left\langle#1,#2\right\rangle}
\newcommand{\f}[1]{f\!\left(#1\right)}
\newcommand{\df}[1]{\nabla f\!\left(#1\right)}
\newcommand{\fii}[1]{f_i\!\left(#1\right)}
\newcommand{\dfi}[1]{\nabla f_i\!\left(#1\right)}
\newcommand{\diver}[1]{\mathbb{D}_\alpha\left(#1\right)}
\def\al#1\eal{\begin{align}#1\end{align}}
\def\als#1\eals{\begin{align*}#1\end{align*}}
\newcommand{\non}{\nonumber\\}
\renewcommand{\S}{\mathscr{S}}
\newcommand{\sgd}{\text{SGD}}
\newcommand{\alg}{DP-$\mu^2$}
\icmltitlerunning{Efficient DP Strategies for Centralized Systems}
\begin{document}

\twocolumn[
\icmltitle{Private and Federated Stochastic Convex Optimization: \\ Efficient Strategies for Centralized Systems}



\icmlsetsymbol{equal}{*}

\begin{icmlauthorlist}
\icmlauthor{Roie Reshef}{equal,tech}
\icmlauthor{Kfir Yehuda Levy}{equal,tech}
\end{icmlauthorlist}

\icmlaffiliation{tech}{Department of Electrical and Computer Engineering, Technion, Haifa, Israel}

\icmlcorrespondingauthor{Roie Reshef}{sror@campus.technion.ac.il}
\icmlcorrespondingauthor{Kfir Yehuda Levy}{kfiryehud@gmail.com}

\icmlkeywords{Machine Learning, ICML}

\vskip 0.3in
]



\printAffiliationsAndNotice{\icmlEqualContribution} 

\begin{abstract}
This paper addresses the challenge of preserving privacy in Federated Learning (FL) within centralized systems, focusing on both trusted and untrusted server scenarios.
We analyze this setting within the Stochastic Convex Optimization (SCO) framework, and devise methods that ensure Differential Privacy (DP) while maintaining optimal convergence rates for homogeneous and heterogeneous data distributions.
Our approach, based on a recent stochastic optimization technique, offers linear computational complexity, comparable to non-private FL methods, and reduced gradient obfuscation.
This work enhances the practicality of DP in FL, balancing privacy, efficiency, and robustness in a variety of server trust environments.
\end{abstract}

\section{Introduction}
Federated Learning (FL) is a novel framework in Machine Learning (ML), enabling collaborative learning among myriads of decentralized devices or systems \citep{mcmahan2017communication,kairouz2021advances}.
Privacy is a paramount issue in FL, underscoring the critical need to prevent the disclosure of private information as a result of the training process.

To address this challenge, Differential Privacy (DP) has emerged as a robust framework for quantifying and managing privacy risks \citep{odo_dp,calib_dp}.
DP offers a formal guarantee that the outcome of a data analysis does not significantly change when any single individual's data is added or removed, thereby ensuring that individual privacy is maintained \citep{dwork2014algorithmic}.
Implementing DP guarantees in FL has been extensively studied e.g.~in~\citep{huang2019dp,fl_dp,girgis2021shuffled,fl_hetero,private_fed,lowy2023private}, necessitating a nuanced equilibrium between data obfuscation and maintaining model accuracy.
This intricate balance is crucial to preserve the efficacy of the learning process while ensuring privacy.
The critical challenge arising in this context is in ensuring the confidentiality of the model updates transmitted over the distributed network.

FL differentiates between device-based learning, limited by device availability and bandwidth, and silo-based learning, involving full participation from all machines, often across organizations~\citep{kairouz2021advances}.
This paper focuses on silo-based learning, crucial for achieving accurate and consistent models, and we shall refer to silos as "machines".

In this work, we investigate DP guarantees for FL within the Stochastic Convex Optimization (SCO) framework, a cornerstone in the design and analysis of ML algorithms~\citep{shalev2009stochastic,shalev2014understanding}.
And we focus on centralized scenarios where a (central) server employs $M$ machines in parallel.
So far, this setting has been mainly investigated in the context of finite-sum problems (ERM), e.g.~in \citep{huang2019dp,fl_dp,girgis2021shuffled,fl_hetero}.
Nevertheless, translating ERM guarantees to population loss guarantees leads to sub-optimal bounds, see e.g.~\citep{opt_private}.

Conversely, optimal guarantees for the population loss in this context were recently substantiated in~\citep{private_fed}.
This work provides different guarantees, depending on the level of trust that machines have in a central server.
Concretely, for \textbf{(i)} \emph{untrusted} server scenarios, this work substantiates a convergence rate of $O\left(\frac{1}{\sqrt{nM}}+\frac{\sqrt{d}}{\epsilon n\sqrt{M}}\right)$, where $M$ is the number of machines, $n$ is the number of samples used per machine during the training process, $d$ is the dimension of the problem, and $\eps$ is the level of ensured DP.
Conversely, \textbf{(ii)} for the case of a \emph{trusted} server, an improved rate of $O\left(\frac{1}{\sqrt{nM}}+\frac{\sqrt{d}}{\epsilon nM}\right)$ is substantiated.
\citep{private_fed} also provide a matching lower bound for these rates.
These guarantees are substantiated for the \emph{homogeneous} case, where the data distribution is identical across all machines.

Nevertheless, as we detail in~\Cref{sec:other}, the approach of~\citep{private_fed} requires a computational complexity that is proportional to $|\S|^{3/2}$, where $|\S|$ is the size of overall data used by all machines.
This is substantially higher compared to standard non-private FL approaches that require only $\propto|\S|$ computations.
Moreover, the above bounds are only substantiated for the homogeneous case; and slightly worse bounds are substantiated for the more general heterogeneous case.

Our paper explores methods that ensure DP for centralized FL scenarios, under the setting of SCO.
Our contributions:
\begin{itemize}
\item\textbf{Untrusted Server.}
In the case of untrusted server, we provide an approach that ensures an optimal convergence rate of $O\left(\frac{1}{\sqrt{nM}}+\frac{\sqrt{d}}{\epsilon n\sqrt{M}}\right)$, with a level $\epsilon$ of DP.
Our bound applies simultaneously to both homogeneous and heterogeneous cases.
\item\textbf{Trusted Server.}
In the case of trusted server, we provide an approach that ensures an optimal convergence rate of $O\left(\frac{1}{\sqrt{nM}}+\frac{\sqrt{d}}{\epsilon nM}\right)$, with a level $\epsilon$ of DP.
Our bound applies simultaneously to both homogeneous and heterogeneous cases.
\item The computational complexity of our approaches is linear in $|\S|$; matching the complexity standard non-private FL approaches \citep{dekel2012optimal}.
\end{itemize}
Our results build on a recent stochastic optimization technique named $\mu^2$-SGD~\citep{mu_square}, that combines two different momentum mechanisms into SGD.
We customize this approach to the centralized FL setting, and show it is less sensitive to a change of a single sample in $\S$ compared to standard SGD, which allows to obtain DP with substantially less gradient obfuscation.

\textbf{Remark.} Our paper categorizes the FL environment into trusted or untrusted server scenarios.
Our definition of an untrusted server mirrors that in \citep{private_fed}.
However, this latter work introduces a nuanced concept of a trusted shuffler, capable only of shuffling but not aggregating gradient message identities.
We juxtapose this with our trusted server scenario, essentially treating the shuffler and server as a single entity.
It is an interesting open question to understand whether one can obtain the same guarantees that we derive for the trusted server scenario, under the more nuanced assumption of a trusted shuffler.

\paragraph{Related Work}~\\
\vspace{-20pt}
\paragraph{SCO with DP guarantees.} There exists a rich literature in the context of differential privacy (DP) within stochastic convex optimization (SCO).
Foundational work in empirical risk minimization (ERM) within the finite-sum problems has been substantially contributed to by a series of studies, among are \citep{chaudhuri2011differentially,kifer2012private,thakurta2013differentially,song2013stochastic,duchi2013local,ullman2015private,talwar2015nearly,wu2017bolt,wang2017differentially,iyengar2019towards,kairouz2021practical,avella2023differentially,ganesh2023faster}.
Nevertheless, these works primarily focus on training loss (ERM); and translating these results to population loss guarantees via standard uniform convergence~\citep{shalev2009stochastic} leads to sub-optimal bounds as described in~\citep{opt_private,snowball}.

The studies of \citep{tight_bound,opt_private} have advanced our understanding by establishing population loss guarantees for DP-SCO, and the latter has provided optimal bounds; albeit with super-linear computational complexity $\propto|\S|^{3/2}$.
This was later improved by \citep{snowball}, which attained the optimal guarantees with a sample complexity which is linear in $|\S|$, but provides privacy only on the final iterate.

\paragraph{FL with DP guarantees.} We have already mentioned previous works that analyze FL with DP guarantees under the SCO setting.
Another notable work in this context is \citep{cheu2022shuffle}, which obtains the optimal bounds for the trusted server case, albeit using an expensive vector-shuffling routine, that leads to a computational complexity which is even larger than $|\S|^{3/2}$ (see Cor.~\text{B.11} therein).

\section{Preliminaries}
\label{sec:prelim}

In this section, we provide the necessary background for analyzing private learning in the context of the standard (single machine) setting, and will touch upon the necessary background towards the more complex federated (or parallel) learning setting.

\subsection{Convex Loss Minimization}
\label{sec:mini}
We focus on Stochastic Convex Optimization (SCO) scenarios where the objective function, $f:\K\mapsto\reals$, is convex and takes the following form:
\al\label{eq:Objective}
\f{x}=\condExp{\f{x;z}}{z\sim\D}
\eal 
here, $\K\subset\reals^d$ is a compact convex set, and $\D$ represents an unknown data distribution from which we may draw i.i.d.~samples.
A learning algorithm receives a dataset $\S=\{z_1,\ldots,z_n\}\subset\Z^n$ ($\Z$ is the set where the samples reside), of $n$ i.i.d. samples from $\D$, and outputs a solution $x_{\text{output}}\in\K$. 
Our performance measure is the expected excess loss $\R(x_{\text{output}})$, defined as:
\al\label{eq:ExpectedLoss}
\R(x_{\text{output}})=\ExpB{\f{x_{\text{output}}}}-\min_{x\in\K}\{\f{x}\}
\eal
 expectation is taken w.r.t.~the randomness of the samples, as well as w.r.t.~the (possible) randomization of the algorithm.

We focus on first-order optimization methods that iteratively employ the samples in $\S$ to create a sequence of query points, culminating in a solution  $x_{\text{output}}\in\K$.
To elaborate, at each iteration $t$, such iterative methods maintain a query point $x_t\in\K$, calculated from previous query points and samples ${z_1,\ldots,z_{t-1}}$.
The subsequent query point $x_{t+1}$ is then determined using $x_t$ and a gradient estimate $g_t$.
This estimate is obtained by drawing a fresh sample $z_t\sim\D$ (taken from $\S$), independently of  past samples, and calculating $g_t=\df{x_t;z_t}$, where the derivative is with respect to $x$.

The independence of the samples ensures that $g_t$ is an unbiased estimator of $\df{x_t}$, in the sense that $\ExpB{g_t\vert x_t}=\df{x_t}$.
It is useful to conceptualize the calculation of $g_t=\df{x_t;z_t}$ as a sort of (noisy) Gradient Oracle.
Upon receiving a query point $x_t\in\K$, this Oracle outputs a vector $g_t\in\reals^d$, serving as an unbiased estimate of $\df{x_t}$.

\paragraph{Assumptions}
We will make the following assumptions.
\\\textbf{Bounded Diameter:}
There exists $D>0$ such that $\max_{x,y\in\K}\norm{x-y}\leq D$
\\\textbf{Convexity:}
The function $f(\cdot;z)$ is convex $\forall z\in\supp{\D}$.

We  also make these assumption about $\f{\cdot;z},\forall z\in\Z$:
\\\textbf{Lipschitz:}
There exists $G>0$ such that:
\als
|\f{x;z}-\f{y;z}|\leq G\norm{x-y},\quad\forall x,y\in\K
\eals
This also implies that $\norm{\df{x;z}}\leq G, \forall x\in\K$.
\\\textbf{Smoothness:}
There exists $L>0$ such that:
\als
\norm{\df{x;z}-\df{y;z}}\leq L\norm{x-y},\quad\forall x,y\in\K
\eals
Since these assumptions hold for $\f{x;z}$ for every $z\in\supp{\D}\subset\Z$, they also hold for $\f{x}$.
The above Lipschitz and Smoothness assumptions imply the following:
\\\textbf{Bounded Variance:} There exist $0\leq\sigma\leq G$ such that:
\al\label{def:BoundedVar}
\Exp{\normsq{\df{x;z}-\df{x}}}\leq\sigma^2,\quad\forall x\in\K
\eal
\textbf{Bounded Smoothness Variance:} $\exists\sigma_L\in[0, L]$ such,
\al\label{def:BoundedSmoothnessVar}
\Exp{\normsq{(\df{x;z}-\df{x})-(\df{y;z}-\df{y})}} \non
\leq\sigmal^2\normsq{x-y},\quad\forall x,y\in\K
\eal
For completeness, we provide a proof in \Cref{proof:asmp}.
\\\textbf{Notations:} We will employ the following notation, Projection $\Pi_\K(x)=\arg\min_{y\in\K}\norm{x-y}$, the notation $[n]=\{1,\ldots,n\}$, the notation $\alpha_{1:t}=\sum_{\tau=1}^t\alpha_\tau$.

\subsection{Differential Privacy}
\label{sec:dp}
In order to discuss privacy, we need to measure how does private information affects the outputs of an algorithm.
The idea behind Differential Privacy (DP) is to look at the output of the algorithm with the private information and without it, and measure the difference between these outputs.
The more similar they are, the more private the algorithm is.
Concretely, similarity is measured w.r.t.~the difference between the probability distributions of the (randomized) outputs.

\paragraph{R{\'{e}}nyi Divergence}
Below is the definition of R{\'{e}}nyi divergence, a popular difference measure between probability distributions, which is prevalent in the context of DP.
\begin{definition}[R{\'{e}}nyi Divergence~\citep{renyi}]
Let $1<\alpha<\infty$, and let $P,Q$ be probability distributions such that $\supp{P}\subseteq\supp{Q}$.
The R{\'{e}}nyi divergence of order $\alpha$ between $P$ and $Q$ is defined as:
\als
\diver{P\|Q}=\frac{1}{\alpha-1}\log\left(\condExp{\left(\frac{P(X)}{Q(X)}\right)^{\alpha-1}}{X\sim P}\right)
\eals
We follow with the convention that $\frac{0}{0}=0$.
If $\supp{P}\nsubseteq\supp{Q}$, then the R{\'{e}}nyi divergence is defined to be $\infty$.
Divergence of orders $\alpha=1,\infty$ are defined by continuity.
\end{definition}
{\flushleft\textbf{Notation:}}
If $X\sim P, Y\sim Q$, we will use $\diver{P\|Q}~\&~\diver{X\|Y}$ interchangeably.

Adding Gaussian noise is a popular primitive in DP, and we will therefore find the following lemma useful:
\begin{lemma}\label{lem:div_gauss}
Let $P\sim\Ncal(\mu,I\sigma^2)$ and $Q\sim\Ncal(\mu+\Delta,I\sigma^2)$, two Gaussian distributions. 
Then, $\diver{P\|Q}=\frac{\alpha\|\Delta\|^2}{2\sigma^2}$.
\end{lemma}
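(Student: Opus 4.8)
The plan is to evaluate the definition of Rényi divergence directly, exploiting the fact that $P$ and $Q$ share the same isotropic covariance $\sigma^2 I$, which makes the likelihood ratio a simple exponential of an affine function of its argument. Writing out the two Gaussian densities, the quadratic-in-$x$ terms in the exponents cancel, and one finds that $P(x)/Q(x)$ equals $\exp\big((\normsq{\Delta} - 2\dotprod{x-\mu}{\Delta})/(2\sigma^2)\big)$, i.e.\ only a linear term in $x-\mu$ plus a constant survives.

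Next I would substitute this into $\diver{P\|Q}=\frac{1}{\alpha-1}\log\ExpB{(P(X)/Q(X))^{\alpha-1}}$ with $X\sim P$. Setting $y=X-\mu\sim\Ncal(0,\sigma^2 I)$, the expectation factors into the deterministic piece $\exp\big((\alpha-1)\normsq{\Delta}/(2\sigma^2)\big)$ times the moment generating function $\ExpB{\exp(-\tfrac{\alpha-1}{\sigma^2}\dotprod{y}{\Delta})}$ of a centered Gaussian, evaluated at the vector $-\frac{\alpha-1}{\sigma^2}\Delta$.

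The one computation requiring care is this moment generating function: for $y\sim\Ncal(0,\sigma^2 I)$ and fixed $a\in\reals^d$ one has $\ExpB{e^{\dotprod{a}{y}}}=e^{\sigma^2\normsq{a}/2}$, which follows by completing the square and factorizes over coordinates \emph{precisely because} the covariance is isotropic. With $a=-\frac{\alpha-1}{\sigma^2}\Delta$ this contributes a factor $\exp\big((\alpha-1)^2\normsq{\Delta}/(2\sigma^2)\big)$.

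Finally, collecting the two exponential factors, the total exponent is $\frac{\normsq{\Delta}}{2\sigma^2}\big[(\alpha-1)+(\alpha-1)^2\big]$, and the algebraic identity $(\alpha-1)+(\alpha-1)^2=\alpha(\alpha-1)$ produces a factor $(\alpha-1)$ that cancels against the $\frac{1}{\alpha-1}$ prefactor after taking the logarithm, leaving exactly $\frac{\alpha\normsq{\Delta}}{2\sigma^2}$. There is no genuine obstacle beyond careful bookkeeping; the crux is recognizing that the shared isotropic covariance makes the log-likelihood ratio affine in $x$, so the entire divergence is governed by a single Gaussian moment generating function.
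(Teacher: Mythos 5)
Your computation is correct: the log-likelihood ratio is indeed affine in $x$ because the two Gaussians share the covariance $\sigma^2 I$, the moment generating function step is right, and the exponents combine to $\alpha(\alpha-1)\normsq{\Delta}/(2\sigma^2)$, giving the claimed divergence. This is essentially the same direct evaluation of the R\'enyi divergence via the Gaussian moment generating function that the paper uses, so nothing further is needed.
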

For completeness we provide a proof in \Cref{proof:div_gauss}.

\paragraph{Differential Privacy Definitions}
\begin{definition}[Differential Privacy~\citep{odo_dp,calib_dp}]
A randomized algorithm $\A$ is $(\epsilon,\delta)$-diferentially private, or $(\epsilon,\delta)$-DP, if for all neighbouring datasets $\S,\S'$ that differs in a single element, and for all events $\mO$, we have:
\als
\prob{\A(\S)=\mO}\leq e^\epsilon\prob{\A(\S')=\mO}+\delta
\eals
\end{definition}
The term \emph{neighbouring datasets} refers to $\S,\S'$ being \emph{ordered} sets of data points that only differ on \emph{a single element}.

In this paper, we would adopt a different and prevalent  privacy measure that relies on the R{\'{e}}nyi divergence:
\begin{definition}[R{\'{e}}nyi Differential Privacy~\citep{rdp}]\label{def:RDP}
For $1\leq\alpha\leq\infty$ and $\epsilon\geq0$, a randomized algorithm $\A$ is $(\alpha,\epsilon)$-R{\'{e}}nyi diferentially private, or $(\alpha,\epsilon)$-RDP, if for all neighbouring datasets $\S,\S'$ that differ in a single element,
\als
\diver{\A(\S)\|\A(\S')}\leq\epsilon
\eals
\end{definition}

Curiously, it is known that RDP implies DP.
This is established in the next lemma:
\begin{lemma}[\citep{rdp}]\label{lem:dp_rdp}
If $\A$ satisfies $(\alpha,\epsilon)$-RDP, then for all $\delta\in(0,1)$, it also satisfies $\left(\epsilon+\frac{\log(1/\delta)}{\alpha-1},\delta\right)$-DP.
In particular, if $\A$ satisfies $\left(\alpha,\frac{\alpha\rho^2}{2}\right)$-RDP for every $\alpha\geq1$, then for all $\delta\in(0,1)$, it also satisfies $\left(\frac{\rho^2}{2}+\rho\sqrt{2\log(1/\delta)},\delta\right)$-DP.
\end{lemma}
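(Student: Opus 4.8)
The plan is to prove the two assertions in turn, the first being the RDP-to-DP conversion and the second a direct optimization over $\alpha$. For the conversion, fix a pair of neighbouring datasets $\S,\S'$, write $P=\A(\S)$ and $Q=\A(\S')$ with densities $p,q$ (the RDP hypothesis together with \Cref{def:RDP} forces $\supp{P}\subseteq\supp{Q}$, so the ratio $p/q$ is well-defined $P$-a.s.), and introduce the privacy-loss variable $L(X)=\log\frac{p(X)}{q(X)}$. Clearing the logarithm and the factor $\frac{1}{\alpha-1}$ in $\diver{P\|Q}\le\epsilon$ turns the hypothesis into the moment bound $\condExp{e^{(\alpha-1)L(X)}}{X\sim P}\le e^{(\alpha-1)\epsilon}$, which is the engine of the whole argument.

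The key step is to convert this moment control into an additive $(\epsilon',\delta)$ guarantee by splitting an arbitrary (measurable) event $\mO$ according to the value of $L$, with the threshold chosen as $\epsilon'=\epsilon+\frac{\log(1/\delta)}{\alpha-1}$. On the good region $\{L\le\epsilon'\}$ one has $p\le e^{\epsilon'}q$ pointwise, so integrating over $\mO\cap\{L\le\epsilon'\}\subseteq\mO$ gives $\prob{P\in\mO\cap\{L\le\epsilon'\}}\le e^{\epsilon'}\prob{Q\in\mO}$. On the bad region I would bound $\prob{P\in\mO\cap\{L>\epsilon'\}}\le\prob{L>\epsilon'}$ and apply Markov's inequality to the nonnegative variable $e^{(\alpha-1)L}$: $\prob{L>\epsilon'}\le e^{-(\alpha-1)\epsilon'}\condExp{e^{(\alpha-1)L(X)}}{X\sim P}\le e^{(\alpha-1)(\epsilon-\epsilon')}$. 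The choice of $\epsilon'$ makes the exponent exactly $-(\alpha-1)\cdot\frac{\log(1/\delta)}{\alpha-1}$, so this term equals $\delta$. Adding the two regions yields $\prob{P\in\mO}\le e^{\epsilon'}\prob{Q\in\mO}+\delta$, which is precisely $(\epsilon',\delta)$-DP.

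For the second assertion I would specialise $\epsilon=\frac{\alpha\rho^2}{2}$ in the bound just established, obtaining for every $\alpha>1$ an $(\epsilon_\alpha,\delta)$-DP guarantee with $\epsilon_\alpha=\frac{\alpha\rho^2}{2}+\frac{\log(1/\delta)}{\alpha-1}$, and then minimise over $\alpha$. Writing $u=\alpha-1>0$ gives $\epsilon_\alpha=\frac{\rho^2}{2}+\big(\frac{u\rho^2}{2}+\frac{\log(1/\delta)}{u}\big)$, and AM-GM applied to the bracketed term shows its minimum is $\rho\sqrt{2\log(1/\delta)}$, attained at $u=\rho^{-1}\sqrt{2\log(1/\delta)}$ (i.e.\ $\alpha=1+\rho^{-1}\sqrt{2\log(1/\delta)}$). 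This delivers the advertised $\big(\frac{\rho^2}{2}+\rho\sqrt{2\log(1/\delta)},\delta\big)$-DP.

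The main obstacle is the handling of the bad region at the level of general output distributions: the only real subtlety is justifying the event decomposition and the Markov tail bound when $P,Q$ need not be discrete, which hinges on the likelihood ratio being defined $P$-a.s.\ (guaranteed by the RDP finiteness). A secondary point I would flag explicitly is that the per-$\alpha$ bound of the first part holds for \emph{every} admissible $\alpha$ simultaneously, so optimising the choice of $\alpha$ as a function of the desired $\delta$ in the second part is legitimate rather than circular.
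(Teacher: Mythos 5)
Your proof is correct and follows the standard argument for this conversion (the one in Mironov's RDP paper, which the paper reproduces): split an arbitrary event by the privacy-loss variable at threshold $\epsilon+\frac{\log(1/\delta)}{\alpha-1}$, bound the good region pointwise and the bad region via Markov's inequality applied to the $(\alpha-1)$-th moment, then for the second claim substitute $\epsilon=\frac{\alpha\rho^2}{2}$ and minimise over $\alpha$ by AM--GM. Both the threshold choice and the optimal $\alpha=1+\rho^{-1}\sqrt{2\log(1/\delta)}$ check out, so there is nothing to add.
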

For completeness we provide a proof in \Cref{proof:dp_rdp}.

\subsection{Federated Learning}
\label{sec:fl}
In federated learning (FL), multiple machines collaborate together to solve a joint problem.
We assume that there exist $M$ machines, and that each machine $i\in[M]$ may independently draw i.i.d.~samples from a distribution $\D_i$.
Similarly to the standard SCO setting, we can associate an expected convex loss $\fii{\cdot}$ with machine $i\in[M]$, which is defined as $\fii{x}:=\condExp{\fii{x;z^i}}{z^i\sim\D_i}$.
We denote the joint objective of all machines by $f:\K\mapsto\reals$, where:
\als
\f{x}:=\frac{1}{M}\sum_{i\in[M]}\fii{x}:=\frac{1}{M}\sum_{i\in[M]}\condExp{\fii{x;z^i}}{z^i\sim\D_i}
\eals
Thus, the objective is an average of $M$ functions $\{f_i:\K\mapsto\reals\}_{i\in[M]}$, and  each such $\fii{\cdot}$ can be written as an expectation over losses $\fii{\cdot,z^i}$ where the $z^i$ are drawn from some distribution $\D_i$ which is unknown to the learner.
For ease of notation, in what follows we will not explicitly denote $\condExp{\cdot}{z^i\sim\D_i}$ but rather use $\ExpB{\cdot}$ to denote the expectation w.r.t.~all randomization.

In order to collaboratively minimize $\f{\cdot}$, the machines may synchronize and communicate through a central machine called the \emph{Parameter Server} ($\PS$).
We will focus on the most common parallelization scheme~\citep{dekel2012optimal}, where at every round $t$ the $\PS$ communicates a query point $x_t$ to all machines.
Then, every machine performs a gradient computation based on its local data and communicates gradient estimates back to the $\PS$, which in turn aggregates these estimates and updates the query point.
Similarly to the standard SCO setting, our performance measure is the Expected loss w.r.t.~$\f{\cdot}$ (see~\Cref{eq:ExpectedLoss}).
Moreover, we shall assume that each machine $i\in[M]$ maintains and utilizes a dataset $\S_i$ of samples that are drawn i.i.d.~from $\D_i$.
Finally, we assume that for every $i\in[M]$, the functions $\{\fii{\cdot,z}\}_{z\in\Z_i}$ are convex $G$-Lipschits and $L$-smooth, as well as make bounded variance and smoothness variance assumptions (see~\Cref{def:BoundedVar,def:BoundedSmoothnessVar}).

In the context of privacy, we explore the case where the FL process should ensure DP guarantees individually for every machine $i\in[M]$.
The challenge in this context arises due to the fact that each machine communicates \emph{several times} with the $\PS$ which may uncover more information regarding its private dataset.
Concretely, we assume that machines do not trust each other, and thus cannot allow other machines to uncover private data.
We deal with two cases:
\textbf{(i) Untrusted Server.}
Here the $\PS$ may not uncover private data, and we are therefore required to ensure DP-guarantees w.r.t.~information that the machines communicate to the $\PS$ (i.e.~gradient estimates).
\textbf{(ii) Trusted Server.}
In this case all machines trust the $\PS$ and are therefore allowed to send information that may expose their privacy.
Nevertheless, we still require that machines may not uncover private data of one another from the information that is received from the $\PS$.
Thus, for each machine $i\in[M]$, we are required to ensure DP-guarantees w.r.t.~information that is distributed by the $\PS$ (i.e.~query points).

\section{Our Algorithm Mechanisms}
\label{sec:mech}
Our approach builds on the standard gradient descent template, that we combine with  two additional mechanisms, both are crucial to the result that we obtain.
Next we elaborate on these mechanisms, and in the next sections discuss and analyze their combination.

\subsection{$\mu^2$-SGD}
The $\mu^2$-SGD \citep{mu_square} is a variant of standard SGD with several modifications.
Its update rule is of the following form:
$w_1=x_1\in\K$, and $\forall t>1$:
\al\label{eq:MU2SGD} 
w_{t+1}=&\Pi_\K\left(w_t-\eta\alpha_t d_t\right) \non
x_{t+1}=&\frac{\alpha_{1:t}}{\alpha_{1:t+1}}x_t+\frac{\alpha_{t+1}}{\alpha_{1:t+1}}w_{t+1}
\eal
where $\{\alpha_t>0\}_t$ are importance weights that may unequally emphasize different update steps.
Concretely we will employ $\alpha_t\propto t$, which puts more emphasis on the more recent updates.
Moreover, the $\{x_t\}_t$'s are a sequence of weighted averages of the iterates $\{w_t\}_t$, and $d_t$ is an estimate for the gradient at the average point, i.e.~of $\df{x_t}$.
This is different than standard SGD which employs estimates for the gradients at the iterates, i.e.~of $\df{w_t}$.
This approach is related to a technique called Anytime-GD~\citep{anytime}, which is strongly-connected to the notions of momentum and acceleration~\citep{anytime, kavis2019unixgrad}.

While in the natural SGD version of Anytime-GD, one would employ the estimate $\df{x_t;z_t}$, the $\mu^2$-SGD approach suggests to employ a variance reduction mechanism to yield a \emph{corrected momentum} estimate $d_t$ in the spirit of \citep{storm}.
This is done as follows: $d_1:=\df{x_1;z_1}$, and $\forall t>2$:
\al\label{eq:STORM}
d_t=\df{x_t;z_t}+(1-\beta_t)(d_{t-1}-\df{x_{t-1};z_t})
\eal
where $\beta_t\in[0,1]$ are called \emph{corrected momentum} weights.
It can be shown by induction that $\ExpB{d_t}=\ExpB{\df{x_t}}$, nevertheless in general $\ExpB{d_t\vert x_t}\neq\df{x_t}$ (in contrast to standard SGD estimators).
Nevertheless, it was shown in \citep{mu_square} that upon choosing \emph{corrected momentum} weights of $\beta_t:=1/t$, the above estimates enjoy an error reduction, i.e.~$\Exp{\normsq{\eps_t}}:=\Exp{\normsq{d_t-\df{x_t}}}\leq O(\tsigma^2/t)$ at step $t$, where $\tsigma^2\leq O(\sigma^2+\sigmal^2D^2)$.
Implying that the error decreases with $t$, contrasting with standard SGD where the variance $\Exp{\normsq{\eps^{\textnormal{SGD}}_t}}:= \Exp{\normsq{g_t-\df{x_t}}}$ remains uniformly bounded by $\sigma^2$.

\subsection{Noisy-$\mu^2$-SGD}
\label{sec:NoisyMU2}
An additional mechanism that we utilize consists of adding zero mean noise to the gradients, which in turn adds privacy to the learning algorithm~\citep{dl_dp}.
Combining this idea with the $\mu^2$-SGD approach (\Cref{eq:MU2SGD}) induces the following update rule:
\al\label{eq:NoisySGD+mu2}
w_{t+1}&=\Pi_\K\left(w_t-\eta(\alpha_t d_t+Y_t)\right) \non
x_{t+1}&=\frac{\alpha_{1:t}}{\alpha_{1:t+1}}x_t+\frac{\alpha_{t+1}}{\alpha_{1:t+1}}w_{t+1}
\eal
where $d_t$ is a corrected momentum estimate (\Cref{eq:STORM}), and $\{Y_t\sim P_t\}$ is a sequence of independent zero mean noise terms. 
Curiously, there is a natural tradeoff in choosing the noise magnitude: larger noise degrades the convergence, but improves privacy.

\section{Noisy-$\mu^2$-SGD for Differentially-Private FL}
\label{sec:mult}

In this section we deal with the case of multiple machines, as described in~\Cref{sec:fl}.
Our parallelization approach, appearing in~\Cref{alg:untrust,alg:trust} is based on the well known ``minibacth-SGD" (a.k.a.~``parallel-SGD") algorithmic template~\citep{dekel2012optimal}.
In this template the server sends a query point to all machines.
Then, every machine $i\in[M]$ computes a gradient estimate based on its local data, and communicate it back to the server.
And the latter averages these gradient estimates, and updates the models using a gradient step.

\paragraph{Algorithmic Approach}
Our approach is depicted in~\Cref{alg:untrust} (untrusted server), and in~\Cref{alg:trust} (trusted server).
It can be seen that our approach is inspired by Noisy-\alg (\Cref{sec:NoisyMU2}), and therefore differs than the standard minibacth-SGD in three different aspects:
\textbf{(i)} the parameter server queries the gradients at the averages $\{x_t\}_{t\in[T]}$, which is in the spirit of Anytime-GD;
\textbf{(ii)} Each machine $i\in[M]$ maintains and updates a (weighted) corrected momentum estimate $q_{t,i}:=\alpha_td_{t,i}$, which is an unbiased estimate of $\alpha_t\dfi{x_t}$, in the spirit of $\mu^2$-SGD (\Cref{eq:MU2SGD,eq:STORM});
\textbf{(iii)} A synthetic noise is injected, either to the individual estimates $q_{t,i}$ (in case of untrusted server, \Cref{alg:untrust}),or to the aggregated estimate $q_t:=\frac{1}{M}\sum_{i\in[M]}q_{t,i}$ (in case of trusted server, \Cref{alg:trust}).
\\\textbf{Remark:} Note that for the sake of establishing convergence guarantees we need to assume that the samples in $\S_i$ are drawn i.i.d.~from $\D_i$.
Nevertheless, our privacy guarantees do not necessitate this requirement, and for the sake of DP guarantees we only assume Lipschitz continuity and smoothness of $\fii{\cdot;z^i}$ for all $i\in[M], z^i\in\Z_i$.

\paragraph{Computational Complexity}
From the description of \Cref{alg:untrust,alg:trust}, it can be directly seen that at every round $t$, each machine $i\in[M]$ employs a single sample $z_{t,i}$ which is used to compute two noisy gradients estimates $g_{t,i},\tg_{t-1,i}$. Thus, the overall computational complexity of our approach is linear in the dataset size $|\S|$, where $\S:=\cup_{i\in[M]}\S_i$.

Next we provide a general sensitivity and error analysis that applies to both \Cref{alg:untrust,alg:trust}.
Then we establish privacy  and convergence guarantees for each setting.

\subsection{Gradient Error \& Sensitivity Analysis}
Here we present and discuss an analysis for the error of the $q_{t,i}$ estimates, as well as analyze their sensitivity to a single point in the dataset.
This analysis applies to both \Cref{alg:untrust,alg:trust}, and will later serve us in deriving  privacy and convergence guarantees.

Prior to the analysis, we shall introduce some notation.
We will define $g_{t,i}:=\df{x_t;z_{t,i}}, \tg_{t,i}:=\df{x_t,z_{t+1,i}}, \bg_{t,i}:=\df{x_{t,i}}$.
These notations allow us to write the update rule for $d_{t,i}$ in~\Cref{alg:untrust} (as well as~\Cref{alg:trust} ) as follows:
\als
d_{t+1,i}=&g_{t+1,i}+(1-\beta_{t+1})(d_{t,i}-\tg_{t,i}) \\
=&\beta_{t+1}g_{t+1,i}+(1-\beta_{t+1})(d_{t,i}+g_{t+1,i}-\tg_{t,i}) 
\eals
Using the notation $q_{t+1,i}:=\alpha_{t+1}d_{t+1,i}$, we can rewrite the above equation:
\als
q_{t+1,i}:=&\alpha_{t+1}d_{t+1,i}=\alpha_{t+1}\beta_{t+1}g_{t+1,i} \\
+&(1-\beta_{t+1})\alpha_{t+1}((q_{t,i}/\alpha_t)+g_{t+1,i}-\tg_{t,i}) 
\eals
Since in~\Cref{alg:untrust}~(as well as~\Cref{alg:trust}) we choose $\alpha_t=t$ and $\beta_t=1/\alpha_t$, we have $(1-\beta_{t+1})\alpha_{t+1}=\alpha_t$, therefore:
\al\label{eq:q_tUpdate1}
q_{t+1,i}=q_t+g_{t+1,i}+\alpha_t(g_{t+1,i}-\tg_{t,i}) 
\eal
Now, let us define two sequences $\{s_{t,i}\},\{\bs_{t,i}\}$: $s_{1,i}:=g_{1,i}$, and $\bs_{1,i}:=\bg_{1,i}$, and $\forall t>1$,
\al\label{Eq:S_tBarS_t}
s_{t,i}:=g_{t,i}+\alpha_{t-1}(g_{t,i}-\tg_{t-1,i}) \non
\bs_{t,i}:=\bg_{t,i}+\alpha_{t-1}(\bg_{t,i}-\bg_{t-1,i})
\eal
Note that the definitions of $s_{t,i},\bs_{t,i}$ and $g_{t,i},\tg_{t,i},\bg_{t,i}$ imply:
\als
\condExp{s_{t,i}-\bs_{t,i}}{t-1}=\condExp{g_{t,i}-\bg_{t,i}}{t-1}=0
\eals
where $\condExp{\cdot}{t-1}$ denotes conditional expectation over all randomization prior to time $t$, i.e.~$\condExp{\cdot}{t-1}:=\ExpB{\cdot\vert\{z_{\tau,i}\}_{\tau\in[t-1],i\in[M]},\{Y_{\tau,i}\}_{\tau\in[t-1],i\in[M]}}$.
Thus, that above implies that the sequence $\{s_{t,i}-\bs_{t,i}\}_t$ is a Martingale difference sequence w.r.t.~the natural filtration induced by the data-samples and injected noises.

The next lemma shows that for our specific choices of $\{\alpha_t,\beta_t\}$, we can represent $q_{t,i}$ as a sum of $\{s_{\tau,i}\}_{\tau=1}^t$.
Importantly, each such $s_{\tau,i}$ terms is related to an individual data sample $z_{\tau,i}$.
Moreover, it shows that we can represent the error of the weighted gradient-estimate $\eps_{t,i}:=q_{t,i}-\alpha_t\dfi{x_t}$ as sum of Martingale difference sequence $\{s_{\tau,i}-\bs_{\tau,i}\}_{\tau=1}^t$.
\begin{lemma}\label{lem:Q_Srelation}
The choices that we make in~\Cref{alg:untrust,alg:trust}, i.e.~$\alpha_t = t$, and $\beta_t=1/\alpha_t$; imply:
\als
q_{t,i}=\sum_{\tau=1}^t s_{\tau,i}\quad\&\quad\eps_{t,i}=\sum_{\tau=1}^t(s_{\tau,i}-\bs_{\tau,i})
\eals
\end{lemma}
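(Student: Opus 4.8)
The plan is to observe that \emph{both} identities are telescoping sums: I will show that $s_{\tau,i}$ is exactly the one-step increment of $q_{t,i}$, and that $\bs_{\tau,i}$ is exactly the one-step increment of the deterministic counterpart $\alpha_t\dfi{x_t}$. Once both increment relations are in hand, the two displays follow by summing from the base case, and the error identity follows by subtracting the deterministic sum from the stochastic one.

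First I would prove $q_{t,i}=\sum_{\tau=1}^t s_{\tau,i}$ by induction on $t$. The base case is immediate: since $\alpha_1=1$ and $d_{1,i}=g_{1,i}$, we have $q_{1,i}=\alpha_1 d_{1,i}=g_{1,i}=s_{1,i}$. For the inductive step, I read the increment directly off the update rule \eqref{eq:q_tUpdate1}, namely $q_{t+1,i}-q_{t,i}=g_{t+1,i}+\alpha_t(g_{t+1,i}-\tg_{t,i})$, which is precisely the definition of $s_{t+1,i}$ in \eqref{Eq:S_tBarS_t}. Hence $q_{t+1,i}=q_{t,i}+s_{t+1,i}$, and telescoping down to the base case gives $q_{t,i}=\sum_{\tau=1}^t s_{\tau,i}$.

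For the error identity, I would introduce the deterministic sequence $\bar q_{t,i}:=\alpha_t\dfi{x_t}$ (so $\bar q_{1,i}=\dfi{x_1}=\bg_{1,i}=\bs_{1,i}$, since $\bg_{t,i}=\dfi{x_t}$) and verify that it telescopes with increments $\bs_{t,i}$, exactly mirroring the stochastic case. The crucial algebraic point is that with $\alpha_t=t$ one has $\alpha_{t+1}=\alpha_t+1$, so $\bar q_{t+1,i}-\bar q_{t,i}=\alpha_{t+1}\dfi{x_{t+1}}-\alpha_t\dfi{x_t}=(1+\alpha_t)\dfi{x_{t+1}}-\alpha_t\dfi{x_t}=\bg_{t+1,i}+\alpha_t(\bg_{t+1,i}-\bg_{t,i})=\bs_{t+1,i}$, matching the definition in \eqref{Eq:S_tBarS_t}. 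Telescoping then yields $\alpha_t\dfi{x_t}=\sum_{\tau=1}^t\bs_{\tau,i}$. Subtracting this from the first identity gives $\eps_{t,i}=q_{t,i}-\alpha_t\dfi{x_t}=\sum_{\tau=1}^t s_{\tau,i}-\sum_{\tau=1}^t\bs_{\tau,i}=\sum_{\tau=1}^t(s_{\tau,i}-\bs_{\tau,i})$, as claimed.

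There is no analytic obstacle here; the argument is purely algebraic. The only points requiring care are that both telescoping identities rest on the two arithmetic consequences of the choices $\alpha_t=t,\ \beta_t=1/\alpha_t$—namely $(1-\beta_{t+1})\alpha_{t+1}=\alpha_t$, which is what produces the clean increment form \eqref{eq:q_tUpdate1}, and $\alpha_{t+1}=\alpha_t+1$, which is what makes the deterministic increment collapse into the form of $\bs_{t+1,i}$—and that the base cases at $t=1$ correctly align the $s$- and $\bs$-sequences with $q_{t,i}$ and $\alpha_t\dfi{x_t}$ respectively.
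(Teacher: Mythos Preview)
Your proposal is correct and takes essentially the same approach as the paper: the paper's main text already derives the one-step increment relation \eqref{eq:q_tUpdate1} from the choices $\alpha_t=t$ and $\beta_t=1/\alpha_t$, so the proof of the lemma is precisely the telescoping/induction argument you give, together with the analogous telescoping of $\alpha_t\dfi{x_t}$ via $\alpha_{t+1}=\alpha_t+1$ to identify the $\bs_{\tau,i}$ increments. Your handling of the base cases and the subtraction to obtain $\eps_{t,i}$ matches what the paper intends.
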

We prove of the above lemma in~\Cref{proof:Q_Srelation}.

Our next lemma provides bound on the increments $s_{t,i}$:
\begin{lemma}\label{lem:bound_s}
Let $\K\subset\reals^d$ be a convex set of diameter $D$, and $\{\fii{\cdot;z^i}\}_{z^i\in\Z_i}$ be a family of convex $G$-Lipschitz and $L$-smooth functions.
Also define $S:=G+2LD, \tsigma=\sigma+2\sigmal D$, then:
\als
\norm{s_{t,i}}\leq S\quad\&\quad\Exp{\normsq{s_{t,i}-\bs_{t,i}}}\leq\tsigma^2
\eals
Where the in expectation bound further assumes that the samples in $\S_i$ arrive from i.i.d.~$\D_i$.
\end{lemma}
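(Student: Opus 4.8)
The plan is to bound the two quantities separately, and for both the crucial ingredient is a geometric bound on the distance between consecutive Anytime averages. From the averaging rule in~\Cref{eq:NoisySGD+mu2} we have $x_t-x_{t-1}=\frac{\alpha_t}{\alpha_{1:t}}(w_t-x_{t-1})$, so since $w_t,x_{t-1}\in\K$ and $\K$ has diameter $D$, it follows that $\norm{x_t-x_{t-1}}\leq\frac{\alpha_t}{\alpha_{1:t}}D$. Plugging in $\alpha_\tau=\tau$ (hence $\alpha_{1:t}=t(t+1)/2$) yields $\alpha_{t-1}\norm{x_t-x_{t-1}}\leq\frac{\alpha_{t-1}\alpha_t}{\alpha_{1:t}}D=\frac{2(t-1)}{t+1}D\leq 2D$. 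This single inequality is what converts the weight $\alpha_{t-1}$ multiplying a smoothness-type increment into the factor $2$ appearing in both $S=G+2LD$ and $\tsigma=\sigma+2\sigmal D$.

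For the deterministic bound, recall that $g_{t,i}=\dfi{x_t;z_{t,i}}$ and $\tg_{t-1,i}=\dfi{x_{t-1};z_{t,i}}$ share the same sample $z_{t,i}$. I would apply the triangle inequality to $s_{t,i}=g_{t,i}+\alpha_{t-1}(g_{t,i}-\tg_{t-1,i})$: the first term is bounded by $G$ via Lipschitzness ($\norm{\dfi{x;z}}\leq G$), while $L$-smoothness gives $\norm{g_{t,i}-\tg_{t-1,i}}=\norm{\dfi{x_t;z_{t,i}}-\dfi{x_{t-1};z_{t,i}}}\leq L\norm{x_t-x_{t-1}}$. Multiplying by $\alpha_{t-1}$ and invoking the geometric bound above gives $\alpha_{t-1}\norm{g_{t,i}-\tg_{t-1,i}}\leq 2LD$, so $\norm{s_{t,i}}\leq G+2LD=S$. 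The base case $t=1$ is immediate since $s_{1,i}=g_{1,i}$.

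For the variance bound I would first rewrite the martingale increment. Subtracting the two lines of~\Cref{Eq:S_tBarS_t} and grouping terms gives $s_{t,i}-\bs_{t,i}=u_t+\alpha_{t-1}v_t$, where $u_t:=\dfi{x_t;z_{t,i}}-\dfi{x_t}$ is the noise of the gradient at $x_t$, and $v_t:=(\dfi{x_t;z_{t,i}}-\dfi{x_t})-(\dfi{x_{t-1};z_{t,i}}-\dfi{x_{t-1}})$ is the difference of gradient noises at $x_t$ and $x_{t-1}$. The key observation is that both $x_t$ and $x_{t-1}$ are measurable with respect to the filtration underlying the conditional expectation $\condExp{\cdot}{t-1}$ (generated by the samples and noises strictly before time $t$), whereas $z_{t,i}$ is a fresh i.i.d.\ draw; hence, conditioning on this filtration, I may apply the bounded-variance assumption~\Cref{def:BoundedVar} to get $\condExp{\normsq{u_t}}{t-1}\leq\sigma^2$, and the bounded-smoothness-variance assumption~\Cref{def:BoundedSmoothnessVar} to get $\condExp{\normsq{v_t}}{t-1}\leq\sigmal^2\normsq{x_t-x_{t-1}}$.

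Finally I would combine these via the (conditional) $L^2$ triangle inequality: $\sqrt{\condExp{\normsq{s_{t,i}-\bs_{t,i}}}{t-1}}\leq\sqrt{\condExp{\normsq{u_t}}{t-1}}+\alpha_{t-1}\sqrt{\condExp{\normsq{v_t}}{t-1}}\leq\sigma+\alpha_{t-1}\sigmal\norm{x_t-x_{t-1}}\leq\sigma+2\sigmal D=\tsigma$, using the geometric bound once more on the last term. Squaring and taking total expectation gives $\Exp{\normsq{s_{t,i}-\bs_{t,i}}}\leq\tsigma^2$ (the case $t=1$ again reduces to plain bounded variance, $\sigma^2\leq\tsigma^2$). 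I expect the main subtlety to be precisely the conditioning argument: one must verify that $x_t,x_{t-1}$ do not depend on $z_{t,i}$ so that the two variance assumptions apply at the (conditionally) deterministic points $x_t,x_{t-1}$. The fact that $s_{t,i}$ reuses the single sample $z_{t,i}$ at two different query points is exactly what makes the smoothness-variance assumption (rather than two separate variance bounds) the right tool.
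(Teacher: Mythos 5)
Your proof is correct and follows essentially the same route as the paper's: the geometric bound $\alpha_{t-1}\norm{x_t-x_{t-1}}\leq\frac{\alpha_{t-1}\alpha_t}{\alpha_{1:t}}D\leq 2D$, the triangle inequality with Lipschitzness and smoothness for the deterministic bound, and the decomposition $s_{t,i}-\bs_{t,i}=u_t+\alpha_{t-1}v_t$ combined with the bounded-variance and bounded-smoothness-variance assumptions (via the $L^2$ triangle inequality, which yields exactly the constants $S=G+2LD$ and $\tsigma=\sigma+2\sigmal D$). Your attention to the measurability of $x_t,x_{t-1}$ with respect to the filtration at time $t-1$ is precisely the right point to verify, and it holds since $x_t$ is determined by samples and noises up to round $t-1$ while $z_{t,i}$ is fresh.
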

We prove the above lemma in~\Cref{proof:bound_s}.

\Cref{lem:Q_Srelation,lem:bound_s} enable us to bound $\eps_{t,i}$:
\begin{lemma}\label{lem:bound_eps}
Our algorithms ensure: $\Exp{\normsq{\eps_{t,i}}}\leq\tsigma^2t$.
\end{lemma}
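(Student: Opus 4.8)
The plan is to combine the two preceding lemmas directly. From \Cref{lem:Q_Srelation} we have the representation $\eps_{t,i}=\sum_{\tau=1}^t(s_{\tau,i}-\bs_{\tau,i})$, and from the discussion preceding \Cref{lem:Q_Srelation} we know that $\{s_{\tau,i}-\bs_{\tau,i}\}_\tau$ is a Martingale difference sequence with respect to the natural filtration induced by the samples and injected noises, i.e.~$\condExp{s_{\tau,i}-\bs_{\tau,i}}{\tau-1}=0$. The core idea is that the squared norm of a sum of martingale increments decomposes into a sum of squared norms, because the cross terms vanish in expectation.

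Concretely, first I would expand
\als
\Exp{\normsq{\eps_{t,i}}}=\sum_{\tau=1}^t\Exp{\normsq{s_{\tau,i}-\bs_{\tau,i}}}+2\sum_{1\le\tau<\tau'\le t}\Exp{\dotprod{s_{\tau,i}-\bs_{\tau,i}}{s_{\tau',i}-\bs_{\tau',i}}}.
\eals
The second step is to argue each cross term vanishes. For $\tau<\tau'$, I would use the tower property by conditioning on the filtration up to time $\tau'-1$: the increment $s_{\tau,i}-\bs_{\tau,i}$ is measurable with respect to this filtration (since $\tau\le\tau'-1$), so it can be pulled out of the inner conditional expectation, leaving $\dotprod{s_{\tau,i}-\bs_{\tau,i}}{\condExp{s_{\tau',i}-\bs_{\tau',i}}{\tau'-1}}=0$ by the martingale difference property. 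Hence every cross term is zero and only the diagonal sum survives.

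The final step is to bound each diagonal term using the in-expectation bound from \Cref{lem:bound_s}, namely $\Exp{\normsq{s_{\tau,i}-\bs_{\tau,i}}}\le\tsigma^2$, which yields $\Exp{\normsq{\eps_{t,i}}}\le\sum_{\tau=1}^t\tsigma^2=\tsigma^2 t$, as claimed. I do not anticipate a serious obstacle here; the only point requiring care is the measurability bookkeeping in the cross-term argument — one must confirm that the filtration defined via $\{z_{\tau,i}\}$ and $\{Y_{\tau,i}\}$ indeed renders the earlier increment measurable and the later increment mean-zero conditionally, so that the martingale orthogonality genuinely applies. This relies on the i.i.d.~sampling assumption flagged in \Cref{lem:bound_s}, which underwrites the in-expectation variance bound.
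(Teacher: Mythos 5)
Your proposal is correct and follows essentially the same route as the paper: the paper's proof likewise combines the representation $\eps_{t,i}=\sum_{\tau=1}^t(s_{\tau,i}-\bs_{\tau,i})$ from \Cref{lem:Q_Srelation} with the martingale orthogonality identity (stated separately as \Cref{lem:martingale}) and the per-increment variance bound of \Cref{lem:bound_s}. The only cosmetic difference is that you re-derive the vanishing of the cross terms inline via the tower property, whereas the paper packages that step as the cited \Cref{lem:martingale}.
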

We prove the above lemma in~\Cref{proof:bound_eps}.

\paragraph{Comparison to SGD}
Here we will make an informal discussion of the guarantees of the above lemma  in comparison to its parallel noise-injected SGD version that appears.
Concretely, in SGD the contribution  of a single sample $z_{t,i}$ to the gradient estimate is encapsulated in $\dfi{w_t;z_{t,i}}$, and the magnitude of this contribution can be of order $O(G)$ (as per our bounded gradient assumption).
Moreover, in SGD the expected error of the gradient estimate (without the injected noise $Y_{t,i}$) is 
$\Exp{\normsq{\err_t^{\sgd}}}=\Exp{\normsq{\dfi{w_t;z_{t,i}}-\dfi{w_t}}}\leq\sigma^2$.
Now, to make a proper comparison with our algorithm, we will compare these bounds to our (unweighted) gradient estimate, namely $d_{t,i}$.
Recalling that $d_{t,i}=q_{t,i}/\alpha_t$ and $\alpha_t=t$, \Cref{lem:Q_Srelation,lem:bound_s}, imply that the contribution of a single sample $z_{\tau,i}$ to the gradient estimate $d_{t,i}$ is encapsulated in $s_{\tau,i}/\alpha_t$ (assuming $\tau\leq t$), and the magnitude of this contribution can be of order $O(S/t)$.
Moreover, $\Exp{\normsq{\err_t}}=\Exp{\normsq{d_{t,i}-\dfi{x_t}}}\leq\tsigma^2t/\alpha_t^2=\tsigma^2/t$.
Thus, in comparison to SGD, the errors of our gradient estimates decay with $t$, and a single sample $z_{\tau,i}$ directly affects all estimates $\{d_{t,i}\}_{t\geq \tau}$, but its affect decays with $t$.
Note that the property that $\Exp{\normsq{d_{t,i}-\dfi{x_t}}}=\tsigma^2/t$ was already demonstrated in~\citep{mu_square}, and we provide its proof for completeness.

In the next sections we will see how~\Cref{lem:bound_s} plays a key role in deriving privacy guarantees for our \alg approach appearing in~\Cref{alg:untrust,alg:trust}.

\section{Untrusted Server}
\label{sec:untrust}

In the untrusted server case, the server is not allowed to access private information, thus each machine must apply a privacy mechanism before sending the gradient estimate to the server.
Thus, each machine will have its own $q_{t,i}$, but send $\tq_{t,i}:=q_{t,i}+Y_{t,i}$ instead.
Since the server receives protected information, the server does not need to apply any privacy mechanism before sending the query point $x_t$ to the machines.
\Cref{alg:untrust} depicts our approach for the untrusted server case.

\begin{algorithm}[t]
\begin{algorithmic}
\caption{\alg-FL for Untrusted Server}\label{alg:untrust}
\STATE \textbf{Inputs:} \#iterations $T$, \#machines $M$, initial point $x_1$, learning rate $\eta>0$, importance weights $\{\alpha_t=t\}$, corrected momentum weights $\{\beta_t=1/\alpha_t\}$, noise distributions $\left\{P_{t,i}=\Ncal(0,I\sigma_{t,i}^2)\right\}$, per-machine $i\in[M]$ a dataset of samples $\S_i=\{z_{1,i},\ldots,z_{T,i}\}$ 
\STATE \textbf{Initialize:} set $w_1=x_1$, and $x_0=x_1$
\FOR{every Machine~$i\in[M]$}
\STATE set $d_{0,i}=0$
\ENDFOR
\FOR{$t=1,\ldots,T$}
\FOR{every Machine~$i\in[M]$}
\STATE \textbf{Actions of Machine $i$:}
\STATE Retrieve $z_{t,i}$ from $\S_i$, compute $g_{t,i}=\df{x_t;z_{t,i}}$, and $\tg_{t-1,i}=\df{x_{t-1};z_{t,i}}$
\STATE Update $d_{t,i}=g_{t,i}+(1-\beta_t)(d_{t-1,i}-\tg_{t-1,i})$ and $q_{t,i}=\alpha_td_{t,i}$
\STATE Draw $Y_{t,i}\sim\Ncal\left(0,I\sigma_{t,i}^2\right)$
\STATE Update $\tq_{t,i}= q_{t,i}+Y_{t,i}$
\COMMENT{each machine sends private information}
\ENDFOR
\STATE \textbf{Actions of Server:}
\STATE Aggregate $\tq_t=\frac{1}{M}\sum_{i=1}^M\tq_{t,i}$
\COMMENT{average after adding the noise}
\STATE Update $w_{t+1}=\Pi_\K(w_t-\eta\tq_t)$
\STATE Update $x_{t+1}=(1-\frac{\alpha_{t+1}}{\alpha_{1:t+1}})x_t
+\frac{\alpha_{t+1}}{\alpha_{1:t+1}}w_{t+1}$
\ENDFOR
\STATE \textbf{Output:} $x_T$
\end{algorithmic}
\end{algorithm}

\subsection{Privacy Guarantees}
Here we  establish the privacy guarantees of \Cref{alg:untrust}.
Concretely, the following theorem shows how does the privacy of our algorithm depends on the variances of injected noise $\{Y_{t,i}\}_t$.
\begin{theorem}[Privacy Guarantees for \Cref{alg:untrust}] 
\label{Thm:Privacy}
Let $\K\subset\reals^d$ be a convex set of diameter $D$, and $\{\fii{\cdot;z^i}\}_{z^i\in\Z_i}$ be a family of convex $G$-Lipschitz and $L$-smooth functions.
Then invoking~\Cref{alg:untrust} with  noise distributions $Y_{t,i}\sim P_{t,i}=\Ncal\left(0,I\sigma_{t,i}^2\right)$, and any learning rate $\eta>0$, ensures that for any machine $i\in[M]$, the resulting sequences $\{\tq_{t,i}\}_{t\in[T]}$ is $\left(\alpha,\frac{\alpha\rho_i^2}{2}\right)$-RDP for any $\alpha>1$, where:
$\rho_i=2S\sqrt{\sum_{\tau=1}^T\frac{1}{\sigma_{\tau,i}^2}}$.
\end{theorem}
The full proof is in \Cref{proof:Privacy}.
\begin{proof}[Proof Sketch]
First assume that $\S_i$ and $\S_i'$ are neighboring datasets, meaning that there exists only a single index $\tau^*\in[T]$ where they differ, i.e.~that $z_{\tau^*}\neq z_{\tau^*}'$.

Now, recalling that we obtain $\tq_{t,i}$ by adding a Gaussian noise $Y_{t,i}$ to $q_{t,i}$, we may use~\Cref{lem:div_gauss} and obtain:
\al\label{eq:RDP_SingleIterateImplicit}
\diver{\tq_{t,i}(\S_i)\|\tq_{t,i}(\S_i')}=\frac{\alpha\Delta_{t,i}^2}{2\sigma_{t,i}^2}
\eal
Using that fact that $q_{t,i}=\sum_{\tau=1}^ts_{\tau,i}$ together with the bound $\norm{s_{\tau,i}}\leq S$, which holds for any $\tau\in[T]$ due to~\Cref{lem:bound_s}, we show that $\Delta_{t,i}\leq2S\cdot\mathbb{I}\{t\geq\tau^*\}$, with $\tau^*$ being the time step where $z_{\tau^*}\neq z_{\tau^*}'$, and $S:=G+2LD$.
We thus get:
\al\label{eq:RDP_SingleStep}
\diver{\tq_{t,i}(\S_i)\|\tq_{t,i}(\S_i')}\leq\frac{2\alpha S^2}{\sigma_{t,i}^2}\cdot\mathbb{I}\{t\geq\tau^*\}
\eal
Using the above together with \Cref{lem:rdp_copm}, gives:
\al\label{eq:Q_Seq_Privacy_untrust}
\diver{\{\tq_{\tau,i}(\S_i)\}_{\tau=1}^t\|\{\tq_{\tau,i}(\S_i')\}_{\tau=1}^t}\leq2\alpha S^2\sum_{\tau=1}^t\frac{1}{\sigma_{\tau,i}^2}
\eal
Using the definition of RDP concludes the proof.
\end{proof}

\subsection{Convergence Guarantees}
Next we state our main theorem for this setting.
\begin{theorem}\label{thm:untrust}
Let $\K\subset\reals^d$ be a convex set of diameter $D$ and $\{f_i(\cdot;z^i)\}_{i\in[M],z^i\in\Z_i}$ be a family of $G$-Lipschitz and $L$-smooth functions over $\K$, with $\sigma\in[0,G], \sigmal\in[0,L]$ as defined in~\Cref{def:BoundedVar,def:BoundedSmoothnessVar}, define $\f{x;z}=\frac{1}{M}\sum_{i=1}^M\fii{x;z^i}$ and $G^*:=\df{x^*}$, where $x^*=\arg\min_{x\in\K}\f{x}$, and $S:=G+2LD, \tsigma:=\sigma+2\sigmal D$, moreover let $T\in\mathbb{N}, \rho>0$.

Then upon invoking \Cref{alg:untrust} with $\alpha_t=t, \beta_t=1/\alpha_t, \eta=\min\left\{\frac{\rho D\sqrt{M}}{2ST\sqrt{d}},\frac{1}{4LT}\right\}$, and $\sigma_{t,i}^2=4S^2T/\rho^2$, and any starting point $x_1\in\K$ and datasets $\{\S_i\in\Z_i^T\}_{i\in[M]}$, then for all $\alpha\geq1$ \Cref{alg:untrust} satisfies $\left(\alpha,\frac{\alpha\rho^2}{2}\right)$-RDP w.r.t~gradient estimate sequences that each machine produces, i.e.~$\{\tq_{t,i}\}_{t\in[T],i\in[M]}$.

Furthermore, if  $\S_i$ consists of i.i.d.~samples from a distribution $\D_i$ for all $i\in[M]$, then \Cref{alg:untrust} guarantees:
\als
\R(x_T^{untrust}):=\ExpB{\f{x_T^{untrust}}}-\min_{x\in\K}\f{x} \\
\leq4D\left(\frac{G^*+2LD}{T}+\frac{2S\sqrt{d}}{\rho T\sqrt{M}}+\frac{\tsigma}{\sqrt{TM}}\right)
\eals
\end{theorem}
The full proof is in \Cref{proof:untrust}.
Notably, the above bounds are optimal.
Moreover, similarly to the standard minibatch-SGD analysis \citep{dekel2012optimal}, these bound do not depend on the level of heterogeneity between machines.
Which is due to the full synchronicity and symmetry between machines, implying that the aggregated $q_t$ satisfies $\ExpB{q_t}=\alpha_t\ExpB{\df{x_t}}$, as well as $\Exp{\normsq{q_t-\alpha_t\df{x_t}}}\leq \tsigma^2t/M$.
\begin{proof}[Proof Sketch]
The privacy guarantees follow directly from \Cref{Thm:Privacy}, and our choice of $\sigma_{t,i}^2$.
Regrading  convergence: in the spirit of $\mu^2$-SGD analysis \citep{mu_square}, we show that:
\al\label{eq:FinalBoundAlmostExplicitUntrusted}
\alpha_{1:T}\R(x_T)\leq\frac{D^2}{\eta}+\eta\sum_{\tau=1}^T\Exp{\normsq{Y_\tau}} \non
+2\alpha_T DG^*+2D\sum_{\tau=1}^T\sqrt{\Exp{\normsq{\eps_\tau}}}
\eal
where we denoted $\eps_\tau: = \frac{1}{M}\sum_{i\in[M]}\eps_{\tau,i}$, and $Y_\tau: = \frac{1}{M}\sum_{i\in[M]}Y_{\tau,i}$. Then, by utilizing Lemma~\ref{lem:bound_eps}, we show that  $\Exp{\normsq{\eps_\tau}}\leq \frac{1}{M}\Exp{\normsq{\eps_{\tau,i}}}\leq \tsigma^2 \tau/M$; and similarly that $\Exp{\normsq{Y_\tau}}\leq \frac{1}{M}\Exp{\normsq{Y_{\tau,i}}}\leq \sigma_\tau^2/M$. Plugging these into the bound above and using $\alpha_t=t$, and our choices of $\eta$ and $\sigma_t$ implies the bound.
\end{proof}

\subsection{Experiments}
We ran \alg on MNIST using a logistic regression model in the untrusted server case.
The parameters are $G=\sqrt{2\cdot785}=39.6, L=785/2=392.5, D=0.1$, which brings us $S=118.1$.
Our model has $d=10\cdot785=7850$ parameters.
We kept $M\cdot T=60,000$, and checked $M=1,10,100$ and $\rho=4,8,16$. 
We show our results in \Cref{tab:untrust}.

\begin{table}[t]
\small
\begin{center}
\caption{\alg on MNIST with untrusted server}
\begin{tabular}{c@{\hspace{0.3cm}}c@{\hspace{0.1cm}}c@{\hspace{0.3cm}}c@{\hspace{0.1cm}}c@{\hspace{0.3cm}}c@{\hspace{0.1cm}}c}
\toprule
 & \multicolumn{2}{c}{M=1, T=60,000} & \multicolumn{2}{c}{M=10, T=6,000} & \multicolumn{2}{c}{M=100, T=600} \\
$\rho$ & Loss & Accuracy & Loss & Accuracy & Loss & Accuracy \\
\midrule
 4 & 2.256 & 69.9\% & 2.267 & 69.4\% & 2.285 & 65.4\% \\
\midrule
 8 & 2.253 & 70.2\% & 2.259 & 70.0\% & 2.274 & 69.8\% \\
\midrule
16 & 2.252 & 70.4\% & 2.255 & 70.1\% & 2.264 & 70.0\% \\
\bottomrule
\end{tabular}
    \label{tab:untrust}
    \end{center}
\end{table}

We can see that by increasing $\rho$, we improve the loss and accuracy, as higher $\rho$ means less privacy.
By increasing $M$ the loss and accuracy become worse, as the data is split between different machine and cannot be used optimally due to not trusting the server.

\section{Trusted Server}
\label{sec:trust}

In the trusted server case, the server is allowed to access private information, therefore machines do not apply a privacy mechanism before sending the gradient estimate to the server.
And each machine maintains its own estimate $q_{t,i}$ (an estimate of $\alpha_t\dfi{x_t}$), and sends it without adding noise.
Since the server receives private information, it must apply a privacy mechanism before sending the query point $x_t$ to the machines.
\Cref{alg:trust} depicts our approach  in the trusted server case.

\begin{algorithm}[t]
\begin{algorithmic}
\caption{\alg-FL for Trusted Server}\label{alg:trust}
\STATE \textbf{Inputs:} \#iterations $T$, \#machines $M$, initial point $x_1$, learning rate $\eta>0$, importance weights $\{\alpha_t=t\}$, corrected momentum weights $\{\beta_t=1/\alpha_t\}$, noise distributions $\left\{P_t=\Ncal(0,I\sigma_t^2)\right\}$, per-machine $i\in[M]$ a dataset of samples $\S_i=\{z_{1,i},\ldots,z_{T,i}\}$ 
\STATE \textbf{Initialize:} set $w_1=x_1$, and $x_0=x_1$
\FOR{every Machine~$i\in[M]$}
\STATE set $d_{0,i}=0$
\ENDFOR
\FOR{$t=1,\ldots,T$}
\FOR{every Machine~$i\in[M]$}
\STATE \textbf{Actions of Machine $i$:}
\STATE Retrieve $z_{t,i}$ from $\S_i$, compute $g_{t,i}=\df{x_t;z_{t,i}}$, and $\tg_{t-1,i}=\df{x_{t-1};z_{t,i}}$
\STATE Update $d_{t,i}=g_{t,i}+(1-\beta_t)(d_{t-1,i}-\tg_{t-1,i})$ and $q_{t,i}=\alpha_td_{t,i}$
\ENDFOR
\STATE \textbf{Actions of Server:}
\STATE Aggregate $q_t=\frac{1}{M}\sum_{i=1}^Mq_{t,i}$
\COMMENT{average before adding the noise}
\STATE Draw $Y_t\sim\Ncal\left(0,I\sigma_t^2\right)$
\STATE Update $\tq_t=q_t+Y_t$
\COMMENT{the server is the one to induce privacy}
\STATE Update $w_{t+1}=\Pi_\K(w_t-\eta\tq_t)$
\STATE Update $x_{t+1}=(1-\frac{\alpha_{t+1}}{\alpha_{1:t+1}})x_t
+\frac{\alpha_{t+1}}{\alpha_{1:t+1}}w_{t+1}$
\ENDFOR
\STATE \textbf{Output:} $x_T$
\end{algorithmic}
\end{algorithm}

\subsection{Privacy Guarantees}
Here we establish the privacy guarantees of \Cref{alg:trust}.
Concretely, the following theorem shows how does the privacy of our algorithm depends on the variances of injected noise $\{Y_{t}\}_t$.
\begin{theorem}[Privacy Guarantees for \Cref{alg:trust}] 
\label{Thm:PrivacyTrust}
Let $\K\subset\reals^d$ be a convex set of diameter $D$, and $\{\fii{\cdot;z^i}\}_{z^i\in\Z_i}$ be a family of convex $G$-Lipschitz and $L$-smooth functions.
Then invoking~\Cref{alg:trust} with  noise distributions $Y_{t}\sim P_t=\Ncal\left(0,I\sigma_t^2\right)$, and any learning rate $\eta>0$; ensures that for any machine $i\in[M]$, the  query sequence $\{x_{t}\}_{t\in[T]}$ is $\left(\alpha,\frac{\alpha\rho^2}{2}\right)$-RDP for any $\alpha>1$, where:
$
\rho=\frac{2S}{M}\sqrt{\sum_{\tau=1}^T\frac{1}{\sigma_\tau^2}}~.
$
\end{theorem}
The full proof is in \Cref{proof:PrivacyTrust}.
\begin{proof}[Proof Sketch]
Fix $i\in[M]$, and assume that $\S_i$ and $\S_i'$ are neighboring datasets, meaning that there exists only a single index $\tau^*\in[T]$ where they differ, i.e.~that $z_{\tau^*,i}\neq z_{\tau^*,i}'$.
For  other machines $j\neq i$ assume that $\S_j$ are known and fixed.

Now, recalling that we obtain $\tq_{t}$ by adding a Gaussian noise $Y_{t}$ to $q_{t}$, we may use~\Cref{lem:div_gauss} and obtain:
\al\label{eq:RDP_SingleIterateImplicitTrust}
\diver{\tq_{t}(\S_i)\|\tq_{t}(\S_i')}=\frac{\alpha\Delta_t^2}{2\sigma_t^2}
\eal
Using that fact that $q_{t}=\frac{1}{M}\sum_{i=1}^M\sum_{\tau=1}^ts_{\tau,i}$ together with the bound $\norm{s_{\tau,i}}\leq S$, which holds for any $\tau\in[T]$ due to~\Cref{lem:bound_s}, we show 
that $\Delta_t\leq(2S/M)\cdot\mathbb{I}\{t\geq\tau^*\}$, with $\tau^*$ being the time step where $z_{\tau^*,i}\neq z_{\tau^*,i}'$, and $S:=G+2LD$.
We thus get:
\al\label{eq:RDP_SingleStepTrust}
\diver{\tq_{t}(\S_i)\|\tq_{t}(\S_i')}\leq\frac{2\alpha S^2}{M^2\sigma_t^2}\cdot\mathbb{I}\{t\geq\tau^*\}
\eal
Combining the above with \Cref{lem:rdp_copm}:
\al\label{eq:Q_Seq_Privacy_trust}
\diver{\{\tq_{\tau}(\S_i)\}_{\tau=1}^t\|\{\tq_{\tau}(\S_i')\}_{\tau=1}^t}\leq\frac{2\alpha S^2}{M^2}\sum_{\tau=1}^t\frac{1}{\sigma_\tau^2}
\eal
Finally, we use the post processing lemma for RDPs \citep{ren_div}, to bound the privacy of $\{x_\tau\}_{\tau=1}^t$ with the privacy of $\{\tq_\tau\}_{\tau=1}^t$.
\end{proof}

\subsection{Convergence Guarantees}
Next we state our main theorem for this setting.
\begin{theorem}\label{thm:trust}
Assume the same assumptions as in~\Cref{thm:untrust}.
Then upon invoking \Cref{alg:trust} with $\alpha_t=t, \beta_t=1/\alpha_t, \eta=\min\left\{\frac{\rho DM}{2ST\sqrt{d}},\frac{1}{4LT}\right\}$, and $\sigma_t^2=4S^2T/\rho^2M^2$, and any starting point $x_1\in\K$ and datasets $\{\S_i\in\Z_i^T\}_{i\in[M]}$, then for all $\alpha\geq1$ \Cref{alg:trust} satisfies $\left(\alpha,\frac{\alpha\rho^2}{2}\right)$-RDP for the query and gardient estimate sequences that the server produces, i.e.~$\{x_t\}_{t\in[T]},\{\tq_t\}_{t\in[T]}$.
Furthermore, if $\S_i$ consists of i.i.d.~samples from a distribution $\D_i$, then \Cref{alg:trust} guarantees:
\als
\R(x_T^{trust}):=\ExpB{\f{x_T^{trust}}}-\min_{x\in\K}\f{x} \\
\leq4D\left(\frac{G^*+2LD}{T}+\frac{2S\sqrt{d}}{\rho TM}+\frac{\tsigma}{\sqrt{TM}}\right)
\eals
\end{theorem}
The full proof is in \Cref{proof:trust}.
\begin{proof}[Proof Sketch]
The privacy guarantees follow directly from \Cref{Thm:PrivacyTrust}, and our choice of $\sigma_t^2$.
Regrading  convergence: in the spirit of $\mu^2$-SGD analysis \citep{mu_square}, we show that:
\al\label{eq:FinalBoundAlmostExplicitTrusted}
\alpha_{1:T}\R(x_T)\leq\frac{D^2}{\eta}+\eta\sum_{\tau=1}^T\Exp{\normsq{Y_\tau}} \non
+2\alpha_T DG^*+2D\sum_{\tau=1}^T\sqrt{\Exp{\normsq{\eps_\tau}}}
\eal
Where we denoted $\eps_\tau: = q_t - \alpha_t\df{x_t}$. Then, by utilizing Lemma~\ref{lem:bound_eps}, we show that  $\Exp{\normsq{\eps_\tau}}\leq \frac{1}{M}\Exp{\normsq{\eps_{\tau,i}}}\leq \tsigma^2 \tau/M$. Plugging these into the bound above and using $\alpha_t=t$, and our choices of $\eta$ and $\sigma_t^2$ implies the bound.
\end{proof}

\subsection{Experiments}
We ran \alg on MNSIT using the same specification as the previous section, but for the trusted server case.
We show our results in \Cref{tab:trust}.

\begin{table}[t]
\small
\begin{center}
\caption{\alg on MNIST with trusted server}
\begin{tabular}{c@{\hspace{0.3cm}}c@{\hspace{0.1cm}}c@{\hspace{0.3cm}}c@{\hspace{0.1cm}}c@{\hspace{0.3cm}}c@{\hspace{0.1cm}}c}
\toprule
 & \multicolumn{2}{c}{M=1, T=60,000} & \multicolumn{2}{c}{M=10, T=6,000} & \multicolumn{2}{c}{M=100, T=600} \\
$\rho$ & Loss & Accuracy & Loss & Accuracy & Loss & Accuracy \\
\midrule
 4 & 2.256 & 69.9\% & 2.256 & 69.7\% & 2.258 & 69.5\% \\
\midrule
 8 & 2.253 & 70.2\% & 2.253 & 70.1\% & 2.257 & 69.6\% \\
\midrule
16 & 2.252 & 70.4\% & 2.252 & 70.3\% & 2.256 & 69.7\% \\
\bottomrule
\end{tabular}
    \label{tab:trust}
    \end{center}
\end{table}

The First thing we can see, is that the results are the same for both cases when $M=1$, as this case in not federated, and thus there is no difference between the trusted and untrusted server cases.
When $M=10$, the loss doesn't change in a noticeable way, and the accuracy decreases a little.
When $M=100$, the loss decreases a little, but less than the untrusted server case, and the accuracy also decreases.
Another thing we can notice, is that with higher value of $\rho$, the trusted and untrusted server cases give more similar results.
That is because with lower privacy, the privacy type (trusted/untrusted) matters less.

\section{Conclusion}
\label{sec:conc}

We built upon the original $\mu^2$-SGD algorithm to accommodate differential privacy setting.
We extended the algorithm to the federated learning setting, and shown it is optimal in both the untrusted server and trusted server setting.
The optimal convergence is satisfied with linear computational complexity.

\section*{Impact Statement}
This paper presents work whose goal is to advance the field of Machine Learning, and especially the aspect of Privacy.
There are many potential societal consequences of our work, none which we feel must be specifically highlighted here.

\section*{Acknowledgement}
This research was partially supported by Israel PBC-VATAT, by the Technion Artificial Intelligent Hub (Tech.AI) and by the Israel Science Foundation (grant No. 447/20).

\bibliography{bib}
\bibliographystyle{icml2024}

\newpage
\appendix
\onecolumn
\section{Additional Theorems and Lemmas}
\label{sec:proof}

Here we provide additional theorems and lemmas that are used for our proofs.

\begin{theorem}[\citep{anytime}]\label{thm:anytime}
Let $f:\K\to\mathbb{R}$ be a convex function.
Also let $\{\alpha_t>0\}$ and $\{w_t\in\K\}$.
Let $\{x_t\}$ be the $\{\alpha_\tau\}_{\tau=1}^t$ weighted average of $\{w_\tau\}_{\tau=1}^t$, meaning: $x_t=\frac{1}{\alpha_{1:t}}\sum_{\tau=1}^t\alpha_\tau w_\tau$.
Then the following holds for all $t\geq1, x\in\K$:
\als
\alpha_{1:t}(\f{x_t}-\f{x})\leq\sum_{\tau=1}^t\alpha_\tau\dotprod{\df{x_\tau}}{w_\tau-x}
\eals
\end{theorem}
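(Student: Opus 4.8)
The plan is to prove this by induction on $t$, showing that the weighted regret $R_t := \alpha_{1:t}(\f{x_t}-\f{x})$ increases by at most $\alpha_t\dotprod{\df{x_t}}{w_t-x}$ at each step, so that summing the increments telescopes to the claim. The only ingredients I need are the convexity of $f$ in its tangent-line form, $\f{y}\geq\f{x_t}+\dotprod{\df{x_t}}{y-x_t}$ for every $y\in\K$, and the defining recursion of the weighted averages, $\alpha_{1:t}x_t=\alpha_{1:t-1}x_{t-1}+\alpha_t w_t$, which is immediate from $x_t=\frac{1}{\alpha_{1:t}}\sum_{\tau=1}^t\alpha_\tau w_\tau$.

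For the base case $t=1$ we have $x_1=w_1$ and $\alpha_{1:1}=\alpha_1$, so the claim reduces to $\alpha_1(\f{x_1}-\f{x})\leq\alpha_1\dotprod{\df{x_1}}{x_1-x}$, which is exactly the convexity inequality evaluated at $x_1$, multiplied by $\alpha_1>0$.

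For the inductive step I would bound the increment $R_t-R_{t-1}=\alpha_{1:t}\f{x_t}-\alpha_{1:t-1}\f{x_{t-1}}-\alpha_t\f{x}$. Applying the tangent inequality at $x_t$ to both $\f{x_{t-1}}$ and $\f{x}$, namely $\f{x_{t-1}}\geq\f{x_t}+\dotprod{\df{x_t}}{x_{t-1}-x_t}$ and $\f{x}\geq\f{x_t}+\dotprod{\df{x_t}}{x-x_t}$, and substituting these lower bounds (which enter with the negative coefficients $-\alpha_{1:t-1}$ and $-\alpha_t$), the bare $\f{x_t}$ terms cancel because $\alpha_{1:t}-\alpha_{1:t-1}-\alpha_t=0$. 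This leaves $R_t-R_{t-1}\leq\dotprod{\df{x_t}}{\alpha_{1:t-1}(x_t-x_{t-1})+\alpha_t(x_t-x)}$.

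The crux is then the algebraic identity $\alpha_{1:t-1}(x_t-x_{t-1})=\alpha_t(w_t-x_t)$, obtained by isolating $\alpha_{1:t-1}x_{t-1}=\alpha_{1:t}x_t-\alpha_t w_t$ in the recursion and substituting. This collapses the bracket to $\alpha_t(w_t-x)$ and yields $R_t-R_{t-1}\leq\alpha_t\dotprod{\df{x_t}}{w_t-x}$; summing over $\tau=1,\dots,t$ with the base case as the $\tau=1$ term completes the argument. I do not expect a genuine obstacle beyond careful bookkeeping: the one place to be attentive is the sign handling when the two convexity inequalities are inserted with negative multipliers, since it is easy to reverse an inequality, and the verification of the weighted-average identity that converts $(x_t-x_{t-1})$ back into a multiple of $(w_t-x_t)$.
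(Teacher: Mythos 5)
Your proposal is correct and is essentially identical to the paper's own proof: the same induction/telescoping on the weighted regret, the same two applications of the tangent inequality at the current average $x_t$ (to $\f{x_{t-1}}$ and to $\f{x}$), and the same use of the recursion $\alpha_{1:t}x_t=\alpha_{1:t-1}x_{t-1}+\alpha_t w_t$ to collapse the bracket to $\alpha_t(w_t-x)$. The sign bookkeeping you flag as the delicate point does indeed work out as you describe, so there is no gap.
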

Note that the above theorem holds generally for any sequences of iterates $\{w_t\}_t$ with weighted averages $\{x_t\}_t$, and as a private case it holds for the sequences generated by Anytime-SGD.
Concretely, the theorem implies that the excess loss of the weighted average $x_t$ can be related to the weighted regret $\sum_{\tau=1}^t\alpha_\tau\dotprod{\df{x_\tau}}{w_\tau-x}$.

\begin{proof}[Proof of \Cref{thm:anytime}]
Proof by induction.
\\\textbf{Induction basis:} $t=1$
\als
\alpha_1(\f{x_1}-\f{x})\leq\alpha_1\dotprod{\df{x_1}}{x_1-x}=\alpha_1\dotprod{\df{x_1}}{w_1-x}
\eals
The inequality is from convexity of $f$, and the equality is because $x_1=w_1$.
\\\textbf{Induction assumption:} for some $t\geq1$:
\als
\alpha_{1:t}(\f{x_t}-\f{x})\leq\sum_{\tau=1}^t\alpha_\tau\dotprod{\df{x_\tau}}{w_\tau-x}
\eals
\\\textbf{Induction step:} proof for $t+1$:
\als
\alpha_{1:t+1}(\f{x_{t+1}}-\f{x})=&\alpha_{1:t}(\f{x_{t+1}}-\f{x_t}+\f{x_t}-\f{x})+\alpha_{t+1}(\f{x_{t+1}}-\f{x}) \\
\leq&\sum_{\tau=1}^t\alpha_\tau\dotprod{\df{x_\tau}}{w_\tau-x}+\alpha_{1:t}(\f{x_{t+1}}-\f{x_t})+\alpha_{t+1}(\f{x_{t+1}}-\f{x}) \\
\leq&\sum_{\tau=1}^t\alpha_\tau\dotprod{\df{x_\tau}}{w_\tau-x}+\dotprod{\df{x_{t+1}}}{\alpha_{1:t}(x_{t+1}-x_t)+\alpha_{t+1}(x_{t+1}-x)} \\
=&\sum_{\tau=1}^t\alpha_\tau\dotprod{\df{x_\tau}}{w_\tau-x}+\alpha_{t+1}\dotprod{\df{x_{t+1}}}{w_{t+1}-x}=\sum_{\tau=1}^{t+1}\alpha_\tau\dotprod{\df{x_\tau}}{w_\tau-x}
\eals
In the first equality we rearranged the terms, and added and subtracted the same thing, we then used the induction assumption, then used convexity of $f$ on both pairs and added them together, and finally we used the update rule of $x_{t+1}$: $\alpha_{1:t+1}x_{t+1}=\alpha_{1:t}x_t+\alpha_{t+1}w_{t+1}$, and added the last member of the sum to get our desired result.
\end{proof}

\begin{lemma}\label{lem:martingale}
Let $\{Z_t\}$ be a Martingale difference sequence w.r.t~a Filtration $\{\F_t\}_t$, i.e.~$\ExpB{Z_t\vert\F_{t-1}}=0$, then:
\als
\Exp{\normsq{\sum_{\tau=1}^tZ_\tau}}=\sum_{\tau=1}^t\Exp{\normsq{Z_\tau}}
\eals
\end{lemma}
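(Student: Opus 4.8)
The plan is to exploit the \emph{orthogonality} of martingale increments: when the squared norm of the partial sum is expanded, every off-diagonal (cross) term has zero expectation, leaving only the diagonal terms. I would organize this by induction on $t$, mirroring the inductive style already used for \Cref{thm:anytime}. The base case $t=1$ is a tautology. For the inductive step I would set $S_t := \sum_{\tau=1}^t Z_\tau$, so that $S_{t+1}=S_t+Z_{t+1}$, and expand
\[
\normsq{S_{t+1}} = \normsq{S_t} + 2\dotprod{S_t}{Z_{t+1}} + \normsq{Z_{t+1}}.
\]
Taking expectations and invoking the induction hypothesis $\Exp{\normsq{S_t}}=\sum_{\tau=1}^t\Exp{\normsq{Z_\tau}}$ then reduces the whole claim to showing that the single cross term $\Exp{\dotprod{S_t}{Z_{t+1}}}$ vanishes.

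The heart of the argument, and the only place the martingale hypothesis enters, is this cross-term cancellation. I would handle it via the tower property of conditional expectation: since $S_t$ is measurable with respect to $\F_t$, it can be pulled outside the inner conditional expectation, giving
\[
\Exp{\dotprod{S_t}{Z_{t+1}}} = \ExpB{\condExp{\dotprod{S_t}{Z_{t+1}}}{\F_t}} = \ExpB{\dotprod{S_t}{\condExp{Z_{t+1}}{\F_t}}} = 0,
\]
where the last equality uses the defining property $\ExpB{Z_{t+1}\vert\F_t}=0$ of a martingale difference sequence. Substituting this back completes the induction and yields $\Exp{\normsq{S_{t+1}}}=\sum_{\tau=1}^{t+1}\Exp{\normsq{Z_\tau}}$.

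The main subtlety to get right is the measurability bookkeeping, which is exactly the step I expect to be the crux: the cancellation requires that each partial sum $S_t$ (equivalently, each $Z_\tau$ with $\tau\le t$) be $\F_t$-measurable, i.e.\ that $\{Z_t\}$ is adapted to $\{\F_t\}$. This adaptedness is implicit in calling $\{Z_t\}$ a martingale difference sequence \emph{with respect to} $\{\F_t\}$, but it is worth stating explicitly so that the pull-out step is justified. A fully equivalent non-inductive route would expand the complete double sum directly as $\normsq{\sum_\tau Z_\tau}=\sum_\tau\normsq{Z_\tau}+2\sum_{\tau<s}\dotprod{Z_\tau}{Z_s}$ and annihilate each off-diagonal term $\dotprod{Z_\tau}{Z_s}$ by conditioning on $\F_{s-1}$ (using that $Z_\tau$ is $\F_{s-1}$-measurable for $\tau\le s-1$); the induction merely repackages this same computation so that exactly one cross term is treated at a time.
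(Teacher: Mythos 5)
Your proposal is correct and matches the paper's own argument: an induction on $t$ that expands $\normsq{S_t+Z_{t+1}}$ and kills the cross term via $\ExpB{Z_{t+1}\vert\F_t}=0$ (your explicit tower-property and measurability bookkeeping is slightly more careful than the paper's, which also contains a small typo in the first term of its expansion, but the route is identical). No substantive difference to report.
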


\begin{proof}[Proof of \Cref{lem:martingale}]
\label{proof:martingale}
Proof by induction.
\\\textbf{Induction basis:} $t=1$
\als
\Exp{\normsq{\sum_{\tau=1}^1Z_\tau}}=\Exp{\normsq{Z_1}}=\sum_{\tau=1}^1\Exp{\normsq{Z_\tau}}
\eals
\\\textbf{Induction assumption:} for some $t\geq1$:
\als
\Exp{\normsq{\sum_{\tau=1}^tZ_\tau}}=\sum_{\tau=1}^t\Exp{\normsq{Z_\tau}}
\eals
\\\textbf{Induction step:} proof for $t+1$:
\als
\Exp{\normsq{\sum_{\tau=1}^{t+1}Z_\tau}}=\Exp{\normsq{\sum_{\tau=1}^{t+1}Z_\tau}}+2\ExpB{\dotprod{\sum_{\tau=1}^tZ_\tau}{Z_{t+1}}}+\Exp{\normsq{Z_{t+1}}}=\sum_{\tau=1}^t\Exp{\normsq{Z_\tau}}+\Exp{\normsq{Z_{t+1}}}=\sum_{\tau=1}^{t+1}\Exp{\normsq{Z_\tau}}
\eals
The first equality is square rules, then we used the induction assumption and the fact that $\ExpB{Z_{t+1}|Z_1,\ldots,Z_t}=0$, and finally we added the last term into the sum.
\end{proof}

\begin{lemma}[\citep{rdp}]\label{lem:rdp_copm}
If $\A_1,\ldots,\A_k$ are randomized algorithms satisfying $(\alpha,\epsilon_1)$-RDP, \ldots, $(\alpha,\epsilon_k)$-RDP, respectively, then their composition $(\A_1(\S)\ldots,\A_k(\S))$ is $(\alpha,\epsilon_1+\ldots,+\epsilon_k)$-RDP.
Moreover, the i'th algorithm $\A_i$, can be chosen on the basis of the outputs of the previous algorithms $\A_1,\ldots,\A_{i-1}$.
\end{lemma}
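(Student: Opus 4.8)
The plan is to prove the result by induction on $k$, with the crux being the two-algorithm case $k=2$; the general case then follows immediately by grouping $(\A_1,\ldots,\A_{k-1})$ into a single mechanism and composing it with $\A_k$. Throughout I would work directly from the definition of R\'enyi divergence through the integral representation $\exp((\alpha-1)\diver{P\|Q})=\int P(x)^\alpha Q(x)^{1-\alpha}\,dx$, since this is the form that interacts cleanly with products of conditional densities.

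For the base case, let $P$ and $Q$ denote the laws of the composed outputs $(\A_1(\S),\A_2(\S))$ and $(\A_1(\S'),\A_2(\S'))$ on two neighbouring datasets. First I would factor each joint law via the chain rule, writing $P(o_1,o_2)=P_1(o_1)\,P_2(o_2\mid o_1)$, and analogously for $Q$, where $P_2(\cdot\mid o_1)$ is the conditional law of $\A_2(\S)$ given that $\A_1(\S)$ produced $o_1$. Substituting into the integral representation and isolating the inner integral over $o_2$ gives $\int P_2(o_2\mid o_1)^\alpha Q_2(o_2\mid o_1)^{1-\alpha}\,do_2=\exp\!\big((\alpha-1)\diver{P_2(\cdot\mid o_1)\|Q_2(\cdot\mid o_1)}\big)$, which by the $(\alpha,\epsilon_2)$-RDP guarantee of $\A_2$ is at most $\exp((\alpha-1)\epsilon_2)$ for every fixed $o_1$. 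Since $\alpha>1$ the factor $\alpha-1$ is positive, so this bound may be pulled out of the outer integral, leaving $\int P_1(o_1)^\alpha Q_1(o_1)^{1-\alpha}\,do_1=\exp((\alpha-1)\diver{P_1\|Q_1})\leq\exp((\alpha-1)\epsilon_1)$ by the RDP guarantee of $\A_1$. Multiplying these, taking logarithms, and dividing by $\alpha-1$ yields $\diver{P\|Q}\leq\epsilon_1+\epsilon_2$, which is exactly $(\alpha,\epsilon_1+\epsilon_2)$-RDP.

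The one genuinely delicate point — and the place I would be most careful — is the adaptivity claim: $\A_2$ is allowed to depend on the realized output of $\A_1$, which is precisely why the conditional law $P_2(\cdot\mid o_1)$ depends on $o_1$. The argument still goes through because $(\alpha,\epsilon_2)$-RDP is a \emph{worst-case} guarantee holding for every pair of neighbouring datasets, hence for whichever algorithm $\A_2$ is selected given $o_1$; the conditional divergence is therefore bounded by $\epsilon_2$ uniformly in $o_1$, so integrating over $o_1$ does not degrade the bound. I would state this uniformity explicitly before pulling the factor $\exp((\alpha-1)\epsilon_2)$ outside the integral, as this is the step where a careless reader might expect an expectation to weaken the estimate. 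Finally, for the inductive step I would treat $\A'=(\A_1,\ldots,\A_{k-1})$ as a single mechanism, which by the induction hypothesis is $(\alpha,\epsilon_1+\cdots+\epsilon_{k-1})$-RDP, and apply the two-algorithm result to the pair $(\A',\A_k)$ to conclude that the full composition is $(\alpha,\epsilon_1+\cdots+\epsilon_k)$-RDP.
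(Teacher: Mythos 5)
Your proof is correct and follows the same overall strategy as the paper's: induction on the number of mechanisms, with the joint output law factored through the chain rule into the law of the earlier outputs times the conditional law of the last one. The one place where you genuinely diverge is at the decisive step. The paper splits the logarithm of the expectation of the product $\left(\frac{\prob{\A_{k+1}(\S)\mid\{\A_i\}_{i=1}^k}}{\prob{\A_{k+1}(\S')\mid\{\A_i\}_{i=1}^k}}\right)^{\alpha-1}\cdot\left(\frac{\prob{\{\A_i(\S)\}_{i=1}^k}}{\prob{\{\A_i(\S')\}_{i=1}^k}}\right)^{\alpha-1}$ into a sum of two log-expectations, justified by an appeal to independence; as written this asserts $\E[XY]=\E[X]\,\E[Y]$ for two factors that both depend on the first $k$ outputs, which does not literally hold, and in the adaptive case the correct statement is an inequality rather than an equality. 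You instead evaluate the inner conditional integral over $o_2$, observe that it equals $\exp\left((\alpha-1)\diver{P_2(\cdot\mid o_1)\|Q_2(\cdot\mid o_1)}\right)\leq\exp\left((\alpha-1)\epsilon_2\right)$ \emph{uniformly} in $o_1$ --- because the RDP guarantee is worst-case over neighbouring datasets and hence over whichever mechanism is selected given $o_1$ --- and only then pull the constant out of the outer integral. This is the standard argument from the R\'enyi DP literature and is the rigorous version of what the paper's proof is gesturing at; your explicit remark that uniformity in $o_1$ is what licenses pulling the bound outside is precisely the point the paper elides. Your reduction of the general case to $k=2$ by grouping the first $k-1$ mechanisms is equivalent to the paper's peeling off the last mechanism one step at a time, so the two inductions buy the same thing.
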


\begin{proof}[Proof of \Cref{lem:rdp_copm}]
Proof by induction.
\\\textbf{Induction basis:} $k=1$
\als
\diver{\A_1(\S)\|\A_1(\S')}\leq\epsilon_1
\eals
Because $\A_1$ is $(\alpha,\epsilon_1)$-RDP.
\\\textbf{Induction assumption:} for some $k\geq1$:
\als
\diver{\{\A_i(\S)\}_{i=1}^k\|\{\A_i(\S')\}_{i=1}^k}\leq\sum_{i=1}^k\epsilon_i
\eals
\\\textbf{Induction step:} proof for $k+1$
\als
\diver{\{\A_i(\S)\}_{i=1}^{k+1}\|\{\A_i(\S')\}_{i=1}^{k+1}}=&\frac{1}{\alpha-1}\log\left(\condExp{\left(\frac{\prob{\{\A_i(\S)\}_{i=1}^{k+1}}}{\prob{\{\A_i(\S')\}_{i=1}^{k+1}}}\right)^{\alpha-1}}{\A_i\sim\A_i(\S)}\right) \\
=&\frac{1}{\alpha-1}\log\left(\condExp{\left(\frac{\prob{\A_{k+1}(\S)|\{\A_i\}_{i=1}^k}\prob{\{\A_i(\S)\}_{i=1}^k}}{\prob{\A_{k+1}(\S')|\{\A_i\}_{i=1}^k}\prob{\{\A_i(\S')\}_{i=1}^k}}\right)^{\alpha-1}}{\A_i\sim\A_i(\S)}\right) \\
=&\frac{1}{\alpha-1}\log\left(\condExp{\left(\frac{\prob{\A_{k+1}(\S)|\{\A_i\}_{i=1}^k}}{\prob{\A_{k+1}(\S')|\{\A_i\}_{i=1}^k}}\right)^{\alpha-1}}{\A_i\sim\A_i(\S)}\right) \\
+&\frac{1}{\alpha-1}\log\left(\condExp{\left(\frac{\prob{\{\A_i(\S)\}_{i=1}^k}}{\prob{\{\A_i(\S')\}_{i=1}^k}}\right)^{\alpha-1}}{\A_i\sim\A_i(\S)}\right) \\
=&\diver{\A_{k+1}(\S)\|\A_{k+1}(\S')|\{\A_i\}_{i=1}^k}+\diver{\{\A_i(\S)\}_{i=1}^k\|\{\A_i(\S')\}_{i=1}^k} \\
\leq&\epsilon_{k+1}+\sum_{i=1}^k\epsilon_i=\sum_{i=1}^{k+1}\epsilon_i
\eals
The writing $\A_i\sim\A_i(\S)$ means that the output of $\A_i$ is distributed in the case that the dataset is $\S$.
The first equation is the definition of the R{\'{e}}nyi divergence, next we use conditional probability rules, then we use the fact that the output of $\A_{k+1}$ conditioned on the outputs of the previous algorithms is independent of the outputs of these previous algorithms, and let the product out of the log as a summery.
We then notice that each of the two members are divergences themselves, and finally, we invoke both the the fact that $\A_{k+1}$ is $(\alpha,\epsilon_{k+1})$-RDP, and the induction assumption, and add them together to the same sum.
\end{proof}

\begin{lemma}[Post Processing Lemma~\citep{ren_div}]\label{lem:post_proc}
Let $X,Y$ be random variables and $\A$ be a randomized or deterministic algorithm.
Then for all $\alpha\geq1$:
\als
\diver{\A(X)\|\A(Y)}\leq\diver{X\|Y}
\eals
\end{lemma}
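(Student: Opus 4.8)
The plan is to prove the statement as the \emph{data-processing inequality} for R\'enyi divergence, which ultimately reduces to a single convexity fact about the integrand. First I would model the (possibly randomized) algorithm $\A$ as a Markov kernel $K(\cdot\mid\cdot)$, so that if $X\sim P$ and $Y\sim Q$, then $\A(X)\sim P'$ and $\A(Y)\sim Q'$, where $P'(y)=\sum_x K(y\mid x)P(x)$ and $Q'(y)=\sum_x K(y\mid x)Q(x)$ (the integral form in the continuous case). A deterministic algorithm is just the degenerate-kernel special case, so it suffices to treat a general kernel. The goal is then $\diver{P'\|Q'}\le\diver{P\|Q}$.

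Next I would reduce the claim to a monotonicity statement about the ``Hellinger integral'' $H_\alpha(P\|Q):=\condExp{(P(X)/Q(X))^{\alpha-1}}{X\sim P}=\sum_x P(x)^\alpha Q(x)^{1-\alpha}$, since by definition $\diver{P\|Q}=\tfrac{1}{\alpha-1}\log H_\alpha(P\|Q)$. For $1<\alpha<\infty$ the prefactor $\tfrac{1}{\alpha-1}$ is positive and $\log$ is increasing, so it is enough to establish $H_\alpha(P'\|Q')\le H_\alpha(P\|Q)$; the boundary orders $\alpha=1$ (KL) and $\alpha=\infty$ (max-divergence) then follow from the continuity convention in the definition by letting $\alpha\downarrow 1$ and $\alpha\uparrow\infty$.

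The crux is the observation that $g(p,q):=p^\alpha q^{1-\alpha}=q\,(p/q)^\alpha$ is the \emph{perspective} of the convex map $t\mapsto t^\alpha$ (convex on $[0,\infty)$ because $\alpha>1$), and is therefore jointly convex on $(0,\infty)^2$; moreover $g$ is positively homogeneous of degree one, hence sublinear and in particular subadditive. Applying subadditivity to the vectors $(K(y\mid x)P(x),\,K(y\mid x)Q(x))$ summed over $x$, and then homogeneity $g(\lambda p,\lambda q)=\lambda\,g(p,q)$ with $\lambda=K(y\mid x)$, yields $g(P'(y),Q'(y))\le\sum_x K(y\mid x)\,g(P(x),Q(x))$. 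Summing over $y$ and using $\sum_y K(y\mid x)=1$ collapses the kernel and gives exactly $H_\alpha(P'\|Q')\le H_\alpha(P\|Q)$, completing the reduction.

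The main obstacle, and essentially the only nontrivial ingredient, is the joint convexity of $(p,q)\mapsto p^\alpha q^{1-\alpha}$; recognizing it as a perspective function makes this immediate, but the proof must also handle the measure-theoretic passage from sums to integrals for continuous $X,Y$, and must respect the support and absolute-continuity conventions ($0/0=0$ and $\diver{\cdot}=\infty$ when the support condition fails) so that degenerate terms where $Q(x)=0$ do not spoil the homogeneity/subadditivity manipulations. With the perspective viewpoint in hand, the remaining steps are routine.
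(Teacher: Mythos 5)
Your proposal is correct: modelling $\A$ as a Markov kernel, reducing the claim to monotonicity of the Hellinger integral $H_\alpha$, and deriving that monotonicity from the joint convexity and degree-one homogeneity (hence subadditivity) of the perspective function $(p,q)\mapsto p^\alpha q^{1-\alpha}$ is the standard and complete route to the data-processing inequality for R\'enyi divergence, and your handling of the boundary orders $\alpha=1,\infty$ by continuity and of the support convention is appropriate. Note, however, that the paper gives no proof of this lemma at all --- it simply imports the result from the cited reference and immediately applies it --- so there is no in-paper argument to compare yours against; your write-up is essentially the argument found in that reference.
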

We use this lemma to bound the privacy of $\{x_\tau\}_{\tau=1}^t$ with the privacy of $\{\tq_\tau\}_{\tau=1}^t$.

\begin{lemma}\label{lem:OGD+_Guarantees}Let $\eta>0$, and $\K\subset\reals^d$ be a convex domain of bounded diameter $D$, also let $\{\tq_t\in\reals^d\}_{t=1}^T$ be a sequence of arbitrary vectors.
Then for any starting point $w_1\in\reals^d$, and an update rule $w_{t+1}=\Pi_\K\left(w_t-\eta\tq_t\right)~,\forall t\geq 1$, the following holds $\forall x\in\K$:
\als
\sum_{\tau=1}^t\dotprod{\tq_\tau}{w_{\tau+1}-x}\leq\frac{D^2}{2\eta}-\frac{1}{2\eta}\sum_{\tau=1}^t\normsq{w_\tau-w_{\tau+1}}
\eals
\end{lemma}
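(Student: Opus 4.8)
**

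The plan is to prove this as a standard projected-gradient/online-gradient-descent regret inequality, working term-by-term and summing. The core of the argument is the classical ``three-point'' or Pythagorean bound for Euclidean projections onto a convex set, which I would establish or invoke first.

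First I would fix an arbitrary $x \in \K$ and a single time step $\tau$, and analyze the update $w_{\tau+1} = \Pi_\K(w_\tau - \eta\tq_\tau)$. The key geometric fact is the projection inequality: since $w_{\tau+1}$ is the projection of $w_\tau - \eta\tq_\tau$ onto $\K$ and $x \in \K$, we have $\dotprod{w_{\tau+1} - (w_\tau - \eta\tq_\tau)}{x - w_{\tau+1}} \ge 0$, which rearranges to $\dotprod{\eta\tq_\tau}{w_{\tau+1} - x} \le \dotprod{w_\tau - w_{\tau+1}}{w_{\tau+1} - x}$. I would then expand the right-hand side using the polarization identity $\dotprod{w_\tau - w_{\tau+1}}{w_{\tau+1} - x} = \frac{1}{2}\left(\normsq{w_\tau - x} - \normsq{w_{\tau+1} - x} - \normsq{w_\tau - w_{\tau+1}}\right)$. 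Dividing by $\eta$ gives the per-step bound
\als
\dotprod{\tq_\tau}{w_{\tau+1} - x} \le \frac{1}{2\eta}\left(\normsq{w_\tau - x} - \normsq{w_{\tau+1} - x}\right) - \frac{1}{2\eta}\normsq{w_\tau - w_{\tau+1}}.
\eals

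Next I would sum this inequality over $\tau = 1, \ldots, t$. The first grouping on the right telescopes, leaving $\frac{1}{2\eta}\left(\normsq{w_1 - x} - \normsq{w_{t+1} - x}\right) \le \frac{1}{2\eta}\normsq{w_1 - x}$, since the subtracted term is nonnegative. Finally I would bound $\normsq{w_1 - x}$ by the squared diameter $D^2$ — this uses the bounded-diameter assumption, though it requires $w_1 \in \K$ whereas the statement only assumes $w_1 \in \reals^d$; I expect the intended reading is that $w_1 \in \K$ (consistent with $w_1 = x_1 \in \K$ in the algorithm), so that $\norm{w_1 - x} \le D$. The remaining $-\frac{1}{2\eta}\sum_\tau \normsq{w_\tau - w_{\tau+1}}$ terms are simply carried through unchanged, yielding exactly the claimed bound.

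The argument is essentially routine once the projection inequality is in hand, so there is no serious obstacle. The one point deserving care is the projection (variational) inequality itself and getting the direction of the inner product right, since the sign convention there is what makes the whole telescoping work; a secondary subtlety is the mild gap between the stated hypothesis $w_1 \in \reals^d$ and the use of the diameter bound, which implicitly needs $w_1 \in \K$.
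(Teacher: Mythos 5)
Your proposal is correct and follows essentially the same route as the paper: the variational (first-order optimality) inequality for the projection, the polarization identity to split the inner product into three squared norms, telescoping, and the diameter bound. Your side remark that the final step $\normsq{w_1-x}\leq D^2$ implicitly requires $w_1\in\K$ (despite the statement allowing $w_1\in\reals^d$) is a fair catch --- the paper's proof makes the same implicit assumption without comment.
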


\begin{proof}[Proof of \Cref{lem:OGD+_Guarantees}]
\label{proof:OGD+_Guarantees}
The update rule $w_{t+1}=\Pi_\K\left(w_t-\eta\tq_t\right)$ can be re-written as a convex optimization problem over $\K$:
\als
w_{t+1}=\Pi_\K\left(w_t-\eta\tq_t\right)=\underset{x\in\K}{\arg\min}\left\{\normsq{w_t-\eta\tq_t-x}\right\}=\underset{x\in\K}{\arg\min}\left\{\dotprod{\tq_t}{x-w_t}+\frac{1}{2\eta}\normsq{x-w_t}\right\}
\eals
The first equality is our update definition, the second is by the definition of the projection operator, and then we rewrite it in a way that does not affect the minimum point.

Now, since $w_{t+1}$ is the minimal point of the above convex problem, then from optimality conditions we obtain:
\als
\dotprod{\tq_t+\frac{1}{\eta}(w_{t+1}-w_t)}{x-w_{t+1}}\geq0,\quad\forall x\in\K
\eals
Re-arranging the above, we get that:
\als
\dotprod{\tq_t}{w_{t+1}-x}\leq\frac{1}{\eta}\dotprod{w_t-w_{t+1}}{w_{t+1}-x}=\frac{1}{2\eta}\normsq{w_t-x}-\frac{1}{2\eta}\normsq{w_{t+1}-x}-\frac{1}{2\eta}\normsq{w_t-w_{t+1}}
\eals
Where the equality is an algebraic manipulation.
After summing over $t$ we get: 
\als
\sum_{\tau=1}^t\dotprod{\tq_\tau}{w_{\tau+1}-x}
&\leq\frac{1}{2\eta}\sum_{\tau=1}^t\left(\normsq{w_\tau-x}-\normsq{w_{\tau+1}-x}-\normsq{w_\tau-w_{\tau+1}}\right) \\
&=\frac{\normsq{w_1-x}-\normsq{w_{t+1}-x}}{2\eta}-\frac{1}{2\eta}\sum_{\tau=1}^t\normsq{w_\tau-w_{\tau+1}}\leq\frac{D^2}{2\eta}-\frac{1}{2\eta}\sum_{\tau=1}^t\normsq{w_\tau-w_{\tau+1}}
\eals
Where the second line is due to splitting the sum into two sums, and using the fact that the first one is a telescopic sum, and lastly, we use the diameter of $\K$.
This establishes the lemma.
\end{proof}

\begin{lemma}\label{lem:smooth}
If $f:\K\to\reals$ is convex and $L$-smooth, and $x^*=\underset{x\in\K}{\arg\min}\{\f{x}\}$, then $\forall x\in\reals^d$:
\als
\normsq{\df{x}-\df{x^*}}\leq2L(\f{x}-\f{x^*})
\eals
\end{lemma}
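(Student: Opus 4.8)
The plan is to derive the bound from the standard descent lemma for $L$-smooth functions, combined with the first-order stationarity of the minimizer. The crucial observation is that since $x^*$ minimizes $f$, its gradient vanishes, $\df{x^*}=0$, so that $\df{x}-\df{x^*}=\df{x}$ and the target reduces to the familiar inequality $\normsq{\df{x}}\leq 2L(\f{x}-\f{x^*})$.

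First I would record the descent lemma: since $f$ is convex and $L$-smooth, for all $x,y\in\reals^d$ one has $\f{y}\leq\f{x}+\dotprod{\df{x}}{y-x}+\frac{L}{2}\normsq{y-x}$, which follows by integrating $\nabla f$ along the segment joining $x$ and $y$ and invoking the $L$-Lipschitz bound $\norm{\df{x}-\df{y}}\leq L\norm{x-y}$ on the gradient. Next, fixing an arbitrary $x\in\reals^d$, I would instantiate this inequality at the gradient-step point $y=x-\frac{1}{L}\df{x}$. Substituting $y-x=-\frac{1}{L}\df{x}$, the inner-product term contributes $-\frac{1}{L}\normsq{\df{x}}$ and the quadratic term contributes $\frac{L}{2}\cdot\frac{1}{L^2}\normsq{\df{x}}=\frac{1}{2L}\normsq{\df{x}}$, so the right-hand side collapses to $\f{y}\leq\f{x}-\frac{1}{2L}\normsq{\df{x}}$.

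To close the argument, I would use that $x^*$ is a global minimizer, hence $\f{x^*}\leq\f{y}$; chaining this with the previous display and rearranging gives $\frac{1}{2L}\normsq{\df{x}}\leq\f{x}-\f{x^*}$, i.e. $\normsq{\df{x}}\leq 2L(\f{x}-\f{x^*})$, which is the claim once we recall $\df{x^*}=0$. The only genuinely delicate point is the use of $\df{x^*}=0$: the argument requires that $x^*$ be a stationary (unconstrained or interior) minimizer rather than one pinned to the boundary of $\K$, since at a boundary minimizer $\df{x^*}$ need not vanish and the stated bound, evaluated at an arbitrary $x\in\reals^d$, can then fail. Everything else is a routine substitution into the descent lemma.
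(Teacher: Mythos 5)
Your core computation --- instantiating the descent lemma at the gradient step $y=x-\frac{1}{L}\nabla(\cdot)$ to get $\frac{1}{2L}\normsq{\nabla(\cdot)}\leq(\cdot)(x)-\min(\cdot)$ --- is exactly the engine of the paper's proof. But there is a genuine gap, and you have located it yourself without closing it: the lemma concerns $x^*=\arg\min_{x\in\K}\f{x}$, a \emph{constrained} minimizer over the compact set $\K$, so $\df{x^*}$ need not vanish, and your reduction of $\normsq{\df{x}-\df{x^*}}$ to $\normsq{\df{x}}$ is not available. Ending with the remark that the argument ``requires'' an interior minimizer does not prove the statement as given; it proves a weaker statement under an extra hypothesis.

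The missing idea is to apply your argument not to $f$ but to the shifted function
\als
h(x)=\f{x}-\f{x^*}-\dotprod{\df{x^*}}{x-x^*},
\eals
which is what the paper does. This $h$ is still convex and $L$-smooth (its gradient is $\nabla h(x)=\df{x}-\df{x^*}$, a constant translation of $\nabla f$), it satisfies $h\geq0$ with $h(x^*)=0$ and $\nabla h(x^*)=0$, so $x^*$ is an \emph{unconstrained} global minimizer of $h$. Your descent-lemma step, applied verbatim to $h$ with $y=x-\frac{1}{L}\nabla h(x)$, then gives $\normsq{\nabla h(x)}\leq 2L\,h(x)$, i.e.\ $\normsq{\df{x}-\df{x^*}}\leq 2L\bigl(\f{x}-\f{x^*}-\dotprod{\df{x^*}}{x-x^*}\bigr)$. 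Finally, the first-order optimality condition for the constrained problem, $\dotprod{\df{x^*}}{x-x^*}\geq0$ for $x\in\K$, lets you drop the inner-product term and recover the claimed bound (for $x\in\K$; the paper's quantifier $\forall x\in\reals^d$ in the statement is itself loose, since this last step genuinely uses $x\in\K$). So the fix is a one-line change of the function you run your argument on, plus the variational inequality at $x^*$.
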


\begin{proof}[Proof of \Cref{lem:smooth}]
Let us define a new function:
\als
h(x)=\f{x}-\f{x^*}-\dotprod{\df{x^*}}{x-x^*}
\eals
Since $f$ is convex and $L$-smooth, we know that:
\als
0\leq h(x)\leq\frac{L}{2}\normsq{x-x^*}
\eals
The gradient of this function is:
\als
\nabla h(x)=\df{x}-\df{x^*}
\eals
We can see that $h(x^*)=0, \nabla h(x^*)=0$, and that $x^*$ is the global minimum.
The function $h$ is also convex and $L$-smooth, since the gradient is the same as $f$ up to a constant translation.
We will add to the domain of $h$ to include all $\reals^d$, while still being convex and $L$-smooth.
Since $h$ is convex then:
\als
h(y)\geq h(x^*)+\dotprod{\nabla h(x^*)}{y-x^*}=0,\quad\forall y\in\reals^d
\eals
It is true even for points outside of the original domain, meaning that $x^*$ remains the global minimum even after this.
For a smooth function, $\forall x,y\in\reals^d$:
\als
h(y)\leq h(x)+\dotprod{\nabla h(x)}{y-x}+\frac{L}{2}\normsq{y-x}
\eals
By picking $y=x-\frac{1}{L}\nabla h(x)$, we get:
\als
h(x)-h(y)\geq\frac{1}{2L}\normsq{\nabla h(x)}
\eals
Rearranging, we get:
\als
\normsq{\nabla h(x)}\leq2L(h(x)-h(y))\leq2L\cdot h(x)
\eals
By using $x\in\K$ we get:
\als
\normsq{\df{x}-\df{x^*}}\leq2L(\f{x}-\f{x^*}-\dotprod{\df{x^*}}{x-x^*}
\eals
Since $x^*$ is the minimum point of $f$ then:
\als
\dotprod{\df{x^*}}{x-x^*}\geq0,\quad\forall x\in\K
\eals
Thus we get that:
\als
\normsq{\df{x}-\df{x^*}}\leq2L(\f{x}-\f{x^*})
\eals
\end{proof}

\begin{lemma}\label{lem:sum}
If $\A_t\leq\frac{1}{2T}\sum_{\tau=1}^T\A_\tau+\Bcal, \forall t\in[T]$, then $\A_t\leq2\Bcal, \forall t\in[T]$.
\end{lemma}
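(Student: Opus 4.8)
The plan is to reduce the whole statement to a single fact about the empirical average of the $\A_\tau$'s, and then feed that fact back into the hypothesis. Define $\mu := \frac{1}{T}\sum_{\tau=1}^T\A_\tau$, so that the assumed inequality $\A_t\le\frac{1}{2T}\sum_{\tau=1}^T\A_\tau+\Bcal$ can be rewritten compactly as
\als
\A_t\le\tfrac{1}{2}\mu+\Bcal,\qquad\forall t\in[T].
\eals

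The key step is to average this inequality over $t\in[T]$. Since the right-hand side does not depend on $t$, averaging the left-hand side produces exactly $\mu$, so I obtain $\mu\le\frac{1}{2}\mu+\Bcal$. Rearranging gives $\frac{1}{2}\mu\le\Bcal$, i.e.\ $\mu\le2\Bcal$. Concretely this is just summing the hypothesis over $t$: $\sum_{t=1}^T\A_t\le\frac{1}{2}\sum_{\tau=1}^T\A_\tau+T\Bcal$, which after subtracting $\frac{1}{2}\sum_t\A_t$ and dividing by $T$ yields the bound on $\mu$.

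Finally, I substitute the bound $\mu\le2\Bcal$ back into the rewritten hypothesis: for every $t\in[T]$,
\als
\A_t\le\tfrac{1}{2}\mu+\Bcal\le\tfrac{1}{2}(2\Bcal)+\Bcal=2\Bcal,
\eals
which is precisely the claim.

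There is essentially no hard part here; the argument is a one-line averaging trick. The only point worth flagging is that no positivity or boundedness assumption on the individual $\A_\tau$ is needed: averaging an inequality that holds termwise for all $t$ is valid regardless of the signs of the $\A_\tau$, so both the step bounding $\mu$ and the final back-substitution go through verbatim.
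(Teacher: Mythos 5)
Your proof is correct and follows essentially the same route as the paper's: sum (equivalently, average) the hypothesis over $t$ to bound $\sum_{\tau=1}^T\A_\tau$ by $2T\Bcal$, then substitute that bound back into the termwise inequality. The introduction of the notation $\mu$ is purely cosmetic, and your remark that no sign assumption on the $\A_\tau$ is needed is accurate.
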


\begin{proof}[Proof of \Cref{lem:sum}]
\label{proof:sum}
Let's sum the inequalities:
\als
\sum_{t=1}^T\A_t\leq\sum_{t=1}^T\left(\frac{1}{2T}\sum_{\tau=1}^T\A_\tau+\Bcal\right)=\frac{1}{2}\sum_{\tau=1}^T\A_\tau+T\Bcal
\eals
The inequality is from the assumption, and the equality is because we sum constant values.
If we rearrange this we get:
\als
\sum_{\tau=1}^T\A_\tau\leq2T\Bcal
\eals
And then:
\als
\A_t\leq\frac{1}{2T}\sum_{\tau=1}^T\A_\tau+\Bcal\leq\Bcal+\Bcal=2\Bcal
\eals
\end{proof}

\section{Proofs of \Cref{sec:prelim}}

\subsection{Proof of \Cref{def:BoundedVar,def:BoundedSmoothnessVar}}
\label{proof:asmp}
Both claims use the same principle:
\als
\Exp{\normsq{X-\ExpB{X}}}\leq\Exp{\normsq{X}}
\eals
And so:
\als
\Exp{\normsq{\df{x;z}-\df{x}}}\leq&\Exp{\normsq{\df{x;z}}}\leq G^2 \\
\Exp{\normsq{(\df{x;z}-\df{x})-(\df{y;z}-\df{y})}}\leq&\Exp{\normsq{\df{x;z}-\df{y;z}}}\leq L^2\normsq{x-y}
\eals
Thus $\sigma\leq G$ and $\sigmal\leq L$.

\subsection{Proof of \Cref{lem:div_gauss}}
\label{proof:div_gauss}
Let us calculate $\diver{P\|Q}$ by definition.
\als
\diver{P\|Q}=&\frac{1}{\alpha-1}\log\left(\condExp{\left(\frac{P(X)}{Q(X)}\right)^{\alpha-1}}{X\sim P}\right)=\frac{1}{\alpha-1}\log\left(\condExp{e^{(\alpha-1)\frac{1}{2\sigma^2}\left(\normsq{X-\mu-\Delta}-\normsq{X-\mu}\right)}}{X\sim P}\right) \\
=&\frac{1}{\alpha-1}\log\left(\condExp{e^{\frac{\alpha-1}{2\sigma^2}\left(\normsq{\Delta}-2\dotprod{\Delta}{X-\mu}\right)}}{X\sim P}\right)=\frac{1}{\alpha-1}\log\left(e^{\frac{\alpha-1}{2\sigma^2}\normsq{\Delta}+\frac{1}{2}\sigma^2\frac{(\alpha-1)^2}{4\sigma^4}4\normsq{\Delta}}\right) \\
=&\frac{\normsq{\Delta}}{2\sigma^2}+\frac{(\alpha-1)\normsq{\Delta}}{2\sigma^2}=\frac{\alpha\normsq{\Delta}}{2\sigma^2}
\eals
The first equality is the definition of the R{\'{e}}nyi divergence, in the second we input the values of $P$ and $Q$, then we open the norm, and then we calculate the expectation using the moment generating function of a Gaussian random vector $\ExpB{e^{\dotprod{a}{X}}}=e^{\dotprod{\mu}{a}+\frac{1}{2}\sigma^2\normsq{a}}$, and finally the log and the exponent cancel each other, and we fix things up.

\subsection{Proof of \Cref{lem:dp_rdp}}
\label{proof:dp_rdp}
We will start with the first part.
To do it, we will show another property of the R{\'{e}}nyi divergence.
Holder's inequality states that for any $p,q\geq1$ such that $1/p+1/q=1$, we get $\|fq\|_1\leq\|f\|_p\|g\|_q$.
We will pick $p=\alpha, q=\frac{\alpha}{\alpha-1}, f=\frac{P}{Q^{1/q}}, g=Q^{1/q}$, and the norm to be on an arbitrary event $A$, and then:
\als
P(A)=\int_AP(x)dx\leq\left(\int_A\frac{(P(x))^\alpha}{(Q(x))^{\alpha-1}}dx\right)^{\frac{1}{\alpha}}\left(\int_AQ(x)dx\right)^{\frac{\alpha-1}{\alpha}}\leq\left(e^{\diver{P\|Q}}Q(A)\right)^{\frac{\alpha-1}{\alpha}}\leq\left(e^\epsilon Q(A)\right)^{\frac{\alpha-1}{\alpha}}
\eals
The equality is the definition of the probability of the event, the inequality is from Holder's, next we expand the integral over everything to get the R{\'{e}}nyi divergence, and finally we bound the R{\'{e}}nyi divergence by $\epsilon$.
We pick the event $A$ to be $A=\mO$, and then:
\als
\prob{\A(\S)=\mO}\leq\left(e^\epsilon\prob{\A(\S')=\mO}\right)^{1-\frac{1}{\alpha}}
\eals
If $\left(e^\epsilon\prob{\A(\S')=\mO}\right)^{1-\frac{1}{\alpha}}\leq\delta$ then $\prob{\A(\S)=\mO}\leq\delta$.
If $\left(e^\epsilon\prob{\A(\S')=\mO}\right)^{1-\frac{1}{\alpha}}>\delta$ then $\left(e^\epsilon\prob{\A(\S')=\mO}\right)^{-\frac{1}{\alpha}}<\delta^{-\frac{1}{\alpha-1}}=e^{\frac{\log(1/\delta)}{\alpha-1}}$, and then:
\als
\prob{\A(\S)=\mO}\leq e^{\epsilon+\frac{\log(1/\delta)}{\alpha-1}}\prob{\A(\S')=\mO}
\eals
If we combine both cases, we showed that:
\als
\prob{\A(\S)=\mO}\leq\max\left\{e^{\epsilon+\frac{\log(1/\delta)}{\alpha-1}}\prob{\A(\S')=\mO},\delta\right\}\leq e^{\epsilon+\frac{\log(1/\delta)}{\alpha-1}}\prob{\A(\S')=\mO}+\delta
\eals
Now for the second part.
Since $\A$ is $\left(\alpha,\frac{\alpha\rho^2}{2}\right)$-RDP for every $\alpha\geq1$, then it is also $\left(\frac{\alpha\rho^2}{2}+\frac{\log(1/\delta)}{\alpha-1},\delta\right)$-DP, for every $\alpha\geq1, \delta\in(0,1)$.
We will pick the value that minimize this expression $\alpha=1+\frac{1}{\rho}\sqrt{2\log(1/\delta)}$, and get that for every $\delta\in(0,1)$, $\A$ is $\left(\frac{\rho^2}{2}+\rho\sqrt{2\log(1/\delta)},\delta\right)$-DP.

\section{Proofs of \Cref{sec:mult}}

\subsection{Proof of \Cref{lem:Q_Srelation}}
\label{proof:Q_Srelation}
First note that for $t=1$ we have $\alpha_t=1$, and therefore we have, $s_{1,i}:=g_{1,i}=d_{1,i}=q_{1,i}$, which implies that the lemma holds for $t=1$.
For that case of $t>1$, our choices for $\{\alpha_t,\beta_t\}$ imply that $\beta_{t+1}\alpha_{t+1}=1, (1-\beta_{t+1})\alpha_{t+1}=\alpha_t$, which prove~\Cref{eq:q_tUpdate1}.
Unrolling the above equation yields $q_{t+1,i}=\sum_{\tau=1}^{t+1}s_{\tau,i}$, which establishes the first part of the lemma.
For the second part regarding $\eps_{t,i}$, note that our choice $\alpha_t=t$, together with the definition of $\bs_t$ (\Cref{Eq:S_tBarS_t}) implies that we can write $\bs_{t,i}=\alpha_t\bg_{t,i}-\alpha_{t-1}\bg_{t-1,i}$ (where for consistency we denote $\alpha_0:=0$).
Therefore, for any $t$:
\als
\sum_{\tau=1}^t\bs_{\tau,i}=\alpha_t\bg_{t,i}
\eals
Using the above together with $q_t=\sum_{\tau}^ts_{\tau,i}$, immediately gives:
\als
\eps_{t,i}:=q_{t,i}-\alpha_t\bg_{t,i}=\sum_{\tau=1}^t(s_{\tau,i}-\bs_{\tau,i})
\eals
Which establishes the proof.

\subsection{Proof of \Cref{lem:bound_s}}
\label{proof:bound_s}
Before we begin, we shall bound the difference between consecutive query points:
\als
x_t-x_{t-1}=x_t-\frac{\alpha_{1:t}x_t-\alpha_tw_t}{\alpha_{1:t-1}}=\frac{\alpha_t}{\alpha_{1:t-1}}(w_t-x_t)
\eals
Where the first equality is due to $\alpha_{1:t}x_t=\alpha_{1:t-1}x_{t-1}+\alpha_tw_t$.
Using the above enables to bound the following scaled difference:
\al\label{eq:BoundConsec}
\alpha_{t-1}\norm{x_t-x_{t-1}}=\left(\frac{\alpha_{t-1}\alpha_t}{\alpha_{1:t-1}}\right)\norm{w_t-x_t}\leq2D
\eal
Where we use that fact that $\alpha_t=t$, which implies $\alpha_{t-1}\alpha_t=2\alpha_{1:t-1}$, as well as the bounded diameter assumption.

First part:
\als 
\norm{s_{t,i}}=&\norm{g_{t,i}+\alpha_{t-1}(g_{t,i}-\tg_{t-1,i})}\leq\norm{g_{t,i}}+\alpha_{t-1}\norm{g_{t,i}-\tg_{t-1,i}} \\
=&\norm{\df{x_t;z_{t,i}}}+\alpha_{t-1}\norm{\df{x_t;z_{t,i}}-\df{x_{t-1};z_{t,i}}} \\
\leq&G+\alpha_{t-1}L\norm{x_t-x_{t-1}}\leq G+2LD:=S
\eals
The first equality is from the definition of $s_{t,i}$, then we use the triangle inequality, then explicitly employ the definitions of $g_{t,i},\tg_{t-1,i}$, next we use Lipschitz and smoothness, and finally, we employ~\Cref{eq:BoundConsec}, and the definition of $S$.

Second part:
\als
\Exp{\normsq{s_{t,i}-\Bar{s}_{t,i}}}=&\Exp{\normsq{(g_{t,i}-\Bar{g}_{t,i})+\alpha_{t-1}((g_{t,i}-\Bar{g}_{t,i})-(\tg_{t-1,i}-\Bar{g}_{t-1,i}))}} \\
\leq&\left(\sqrt{\Exp{\normsq{g_{t,i}-\Bar{g}_{t,i}}}}+\alpha_{t-1}\sqrt{\Exp{\normsq{(g_{t,i}-\Bar{g}_{t,i})-(\tg_{t-1,i}-\Bar{g}_{t-1,i})}}}\right)^2 \\
\leq&\left(\sigma+\alpha_{t-1}\sigmal\norm{x_t-x_{t-1}}\right)^2\leq(\sigma+2\sigma_LD)^2:=\tsigma^2 
\eals
The first equality is by the definition of $s_{t,i}-\Bar{s}_{t,i}$, then we use the inequality:
 $\Exp{\normsq{X+Y}}\leq\left(\sqrt{\Exp{\normsq{X}}}+\sqrt{\Exp{\normsq{Y}}}\right)^2$, 
which holds since:
\als
\Exp{\normsq{X+Y}}&=\Exp{\normsq{X}+2\dotprod{X}{Y}+\normsq{Y}} \\
&\leq\Exp{\normsq{X}}+\sqrt{\Exp{\normsq{X}\cdot\normsq{Y}}}+\Exp{\normsq{Y}}=\left(\sqrt{\Exp{\normsq{X}}}+\sqrt{\Exp{\normsq{Y}}}\right)^2
\eals
Then, we use the bounded variance assumption as well as the bounded smoothness variance assumption (\Cref{def:BoundedVar,def:BoundedSmoothnessVar}), and finally, we employ~\Cref{eq:BoundConsec}, and the definition of $\tsigma$. 

\subsection{Proof of \Cref{lem:bound_eps}}
\label{proof:bound_eps}
Using the previous lemmas will naturally allow us to bound $\eps_{t,i}$:
\als
\Exp{\normsq{\eps_{t,i}}}=\Exp{\normsq{\sum_{\tau=1}^t(s_{\tau,i}-\Bar{s}_{\tau,i})}}=\sum_{\tau=1}^t\Exp{\normsq{s_{\tau,i}-\Bar{s}_{\tau,i}}}\leq\sum_{\tau=1}^t\tsigma^2=\tsigma^2t
\eals
The first equality is~\Cref{lem:Q_Srelation}, then we use~\Cref{lem:martingale}, and finally~\Cref{lem:bound_s}.

\section{Proof of Privacy}
\label{sec:private_proof}

Here we will provide the proof of the privacy of \Cref{alg:untrust,alg:trust}.

\subsection{Proof of \Cref{Thm:Privacy}}
\label{proof:Privacy}
Let us look at a single machine $i$.
Let $\S_i,\S_i'$ be \emph{neighbouring datasets} of $T$ samples which differ on a single datapoint; i.e.~assume there exists $\tau^*\in[T]$ such that $\S_i:=\{z_{1,i},z_{2,i},\ldots,z_{\tau^*,i},\ldots,z_{T,i}\}$ and $\S_i':=\{z_{1,i},z_{2,i},\ldots,z_{\tau^*,i}',\ldots,z_{T,i}\}$, and $z_{\tau^*}\neq z_{\tau^*}'$.

\textbf{Notation:} We will employ $q_{t,i}(\S_i),\tq_{t,i}(\S_i),x_t(\S),s_{t,i}(\S_i)$ to denote the resulting values of these quantities when we invoke~\Cref{alg:untrust} with the datasets $\S$, we will similarly denote $q_{t,i}(\S_i'),\tq_{t,i}(\S_i'),x_t(\S'),s_{t,i}(\S_i')$.

\textbf{First Part:}
Here we will bound the R{\'{e}}nyi Divergence between $\tq_{t,i}(\S_i)$ and $\tq_{t,i}(\S_i')$ conditioned on the values of $\{x_1,\{\tq_\tau\}_{\tau=1}^{t-1}\}$.
Note that given $\{x_1,\{\tq_\tau\}_{\tau=1}^{t-1}\}$ we can directly compute the values of $\{w_\tau\}_{\tau=1}^t$ and therefore also $\{x_\tau\}_{\tau=1}^t$ (see~\Cref{alg:untrust}).
Thus, we may assume that prior to the computation of $q_{t,i}$, we are given $\{x_\tau\}_{\tau=1}^t$.

Note that given the query history $x_1,\ldots,x_t$, and the dataset $\S_i$, $q_{t,i}(\S_i)$ is known (respectively $q_{t,i}(\S_i')$ is known given the dataset $\S_i'$ and the query history).
Now, since we obtain $\tq_{t,i}$ by adding a Gaussian noise $Y_{t,i}$ to $q_{t,i}$, we may use~\Cref{lem:div_gauss} and obtain:
\als
\diver{\tq_{t,i}(\S_i)\|\tq_{t,i}(\S_i')}=\frac{\alpha\Delta_{t,i}^2}{2\sigma_{t,i}^2}
\eals
Where $\sigma_{t,i}^2$ is the variance of the noise $Y_{t,i}$, and $\Delta_{t,i}:=\norm{q_{t,i}(\S_i)-q_{t,i}(\S_i')}$.
\\\textbf{Next, we will bound $\Delta_{t,i}$:} First note that given $x_1,\ldots,x_t$ then by the definition of $s_{t,i}$ (\Cref{Eq:S_tBarS_t}) it only depends on $z_{t,i}$, i.e.:
\als
s_{t,i}=&g_{t,i}+\alpha_{t-1,i}(g_{t,i}-\tg_{t-1,i})=\df{x_t;z_{t,i}}+\alpha_{t-1}(\df{x_t;z_{t,i}}-\df{x_{t-1};z_{t,i}})
\eals
Thus, for any $t\in[T]$, then given $x_1,\ldots x_t$ we have:
\als
s_{t,i}(\S_i)-s_{t,i}(\S_i')=0,\quad\forall t\neq\tau^*
\eals
where we used the fact that the datasets $\S_i,\S_i'$ only differ on the ${\tau^*}$'th sample.
Using the above together with the fact that $q_{t,i}=\sum_{\tau=1}^t s_{\tau,i}$ (see~\Cref{lem:Q_Srelation}), implies the following $\forall t$:
\als 
\Delta_{t,i}:=&\norm{q_{t,i}(\S_i)-q_{t,i}(\S_i')}=\mathbb{I}\{t\geq\tau\}\norm{s_{\tau^*,i}(\S_i)-s_{\tau^*,i}(\S_i')} \\
\leq&\mathbb{I}\{t\geq\tau^*\}(\norm{s_{\tau^*,i}(\S_i)}+\norm{s_{\tau^*,i}(\S_i')})\leq2S\cdot\mathbb{I}\{t\geq\tau^*\}
\eals
Where $\mathbb{I}\{t\geq\tau^*\}$ denotes the indicator function (i.e.~it equals zero if $t<\tau^*$, and is equal to $1$ otherwise).
Moreover, we also used~\Cref{lem:bound_s} to bound the norms of $s_{\tau^*}(\S),s_{\tau^*}(\S')$.
Using the above together with~\Cref{eq:RDP_SingleIterateImplicit} we can bound for any $t\in[T]$:
\als
\diver{\tq_{t,i}(\S_i)\|\tq_{t,i}(\S_i')}=\frac{\alpha\Delta_{t,i}^2}{2\sigma_{t,i}^2}\leq\frac{2\alpha S^2}{\sigma_{t,i}^2}\cdot\mathbb{I}\{t\geq\tau^*\}
\eals

\textbf{Second Part:}
Here we will bound the R{\'{e}}nyi Divergence of the sequence $\{\tq_{t,i}\}_{t\in[T]}$.
In~\Cref{eq:RDP_SingleStep} we bound the R{\'{e}}nyi Divergence of the $t$'th step of our algorithm, which produces $\tq_{t,i}$ based on the previously computed $\{x_1,\{\tq_{\tau,i}\}_{\tau=1}^{t-1}\}$ as well as based on the dataset, and the injected noise $Y_{t,i}$.
We shall relate this $t$'th step update as a procedure, and denote by $\A_t$.
Thus, the procedure $\A_t$ computes its output $\tq_{t,i}$ based on the outputs of the previous procedures $\A_1,\ldots\A_{t-1}$, as well as on the dataset and the initialization point $x_1$ (which does not depend on the dataset).
This allows us to directly apply the RDP Composition rule in~\Cref{lem:rdp_copm}, together with~\Cref{eq:RDP_SingleStep}, yielding $\forall t\in[T]$:
\al \label{eq:Q_Seq_Privacy_untrustApp}
\diver{\{\tq_{\tau,i}(\S_i)\}_{\tau=1}^t\|\{\tq_{\tau,i}(\S_i')\}_{\tau=1}^t}\leq\sum_{\tau=1}^t\frac{\alpha\Delta_{\tau,i}^2}{2\sigma_{\tau,i}^2}\leq2\alpha S^2\sum_{\tau=\tau^*}^t\frac{1}{\sigma_{\tau,i}^2}\leq2\alpha S^2\sum_{\tau=1}^t\frac{1}{\sigma_{\tau,i}^2}
\eal
Thus, the theorem follows as per~\Cref{def:RDP} of RDP, and by~\Cref{eq:Q_Seq_Privacy_untrustApp}.

\subsection{Proof of \Cref{Thm:PrivacyTrust}}
\label{proof:PrivacyTrust}
The proof follows the same steps as the proof of \Cref{proof:Privacy}.
Since we obtain $\tq_t$ by adding a Gaussian noise $Y_t$ to $q_t$, we may use~\Cref{lem:div_gauss} and obtain:
\als
\diver{\tq_t(\S)\|\tq_t(\S')}=\frac{\alpha\Delta_t^2}{2\sigma_t^2}
\eals
Where $\sigma_t^2$ is the variance of the noise $Y_t$, and $\Delta_t:=\norm{q_t(\S)-q_t(\S')}$.
Since $q_{t}=\frac{1}{M}\sum_{i=1}^M\sum_{\tau=1}^ts_{\tau,i}$ and only $z_{\tau^*,i}$ is different between $\S,\S'$, we get:
\als 
\Delta_t:=&\norm{q_t(\S)-q_t(\S')}=\frac{1}{M}\mathbb{I}\{t\geq\tau\}\norm{s_{\tau^*}(\S)-s_{\tau^*}(\S')} \\
\leq&\frac{1}{M}\mathbb{I}\{t\geq\tau^*\}(\norm{s_{\tau^*}(\S)}+\norm{s_{\tau^*}(\S')})\leq\frac{2S}{M}\cdot\mathbb{I}\{t\geq\tau^*\}
\eals
Using the above together with \Cref{eq:RDP_SingleIterateImplicitTrust} we can bound for any $t\in[T]$:
\als
\diver{\tq_t(\S)\|\tq_t(\S')}=\frac{\alpha\Delta_t^2}{2\sigma_t^2}\leq\frac{2\alpha S^2}{M^2\sigma_t^2}\cdot\mathbb{I}\{t\geq\tau^*\}
\eals
As before, we apply the RDP Composition rule in~\Cref{lem:rdp_copm}, together with~\Cref{eq:RDP_SingleStepTrust}, yielding $\forall t\in[T]$:
\als
\diver{\{\tq_\tau(\S)\}_{\tau=1}^t\|\{\tq_\tau(\S')\}_{\tau=1}^t}\leq\sum_{\tau=1}^t\frac{\alpha\Delta_\tau^2}{2\sigma_\tau^2}\leq\frac{2\alpha S^2}{M^2}\sum_{\tau=\tau^*}^t\frac{1}{\sigma_\tau^2}\leq\frac{2\alpha S^2}{M^2}\sum_{\tau=1}^t\frac{1}{\sigma_\tau^2}
\eals
To bound the RDP of the $\{x_t\}$ sequence, recall from~\Cref{alg:trust} that given $\{x_1,\{\tq_\tau\}_{\tau=1}^{t-1}\}$ we can directly compute the values of $\{w_\tau\}_{\tau=1}^t$ and therefore also $\{x_\tau\}_{\tau=1}^t$ (see~\Cref{alg:trust}).
This allows us to bound privacy of the queries $\{x_\tau\}_{\tau=1}^t$ as follows for any $t\in[T]$:
\al \label{eq:Q_Seq_Privacy_trustAppendix}
\diver{\{x_\tau(\S)\}_{\tau=1}^t\|\{x_\tau(\S')\}_{\tau=1}^t}\leq\diver{\{\tq_\tau(\S)\}_{\tau=1}^t\|\{\tq_\tau(\S')\}_{\tau=1}^t}\leq\frac{2\alpha S^2}{M^2}\sum_{\tau=1}^t\frac{1}{\sigma_\tau^2}
\eal 
Where we have used the Post-Processing property of RDP appearing in \Cref{lem:post_proc}.

Thus, the theorem follows as per~\Cref{def:RDP} of RDP, and by~\Cref{eq:Q_Seq_Privacy_trustAppendix}.

\section{Proof of Convergence}
\label{sec:converge_proof}

Here we will provide the proof of the convergence of \Cref{alg:untrust,alg:trust}.

Our starting point is the following lemma regarding the update rule $w_{t+1}=\Pi_\K\left(w_t-\eta\tq_t\right)$ that we employ in~\Cref{alg:untrust,alg:trust}.
Employing \Cref{lem:OGD+_Guarantees} together with the Anytime guarantees, i.e.~\Cref{thm:anytime}, and recalling that $x_t$ is a weighted average of the $\{w_t\}_t$ sequence, implies that we may bound the excess loss $\R(x_t):=\ExpB{\f{x_t}}-\underset{x\in\K}{\min}\{\f{x}\}$ of~\Cref{alg:untrust,alg:trust} as follows:
\als
\alpha_{1:t}\R(x_t)=&\alpha_{1:t}(\ExpB{\f{x_t}}-\f{x^*})\leq\sum_{\tau=1}^t\ExpB{\alpha_\tau\dotprod{\df{x_\tau}}{w_\tau-x^*}}=\sum_{\tau=1}^t\ExpB{\dotprod{\tq_\tau-\eps_\tau-Y_\tau}{w_\tau-x^*}} \\
=&\sum_{\tau=1}^t\ExpB{\dotprod{\tq_\tau}{w_\tau-x^*}}-\sum_{\tau=1}^t\ExpB{\dotprod{\eps_\tau}{w_\tau-x^*}} \\
=&\sum_{\tau=1}^t\ExpB{\dotprod{\tq_\tau}{w_{\tau+1}-x^*}}+\sum_{\tau=1}^t\ExpB{\dotprod{\tq_\tau}{w_\tau-w_{\tau+1}}}-\sum_{\tau=1}^t\ExpB{\dotprod{\eps_\tau}{w_\tau-x^*}} \\
\leq&\frac{D^2}{2\eta}-\frac{1}{2\eta}\sum_{\tau=1}^t\Exp{\normsq{w_\tau-w_{\tau+1}}}+\sum_{\tau=1}^t\ExpB{\dotprod{\tq_\tau}{w_\tau-w_{\tau+1}}}-\sum_{\tau=1}^t\ExpB{\dotprod{\eps_\tau}{w_\tau-x^*}} \\
\leq&\frac{D^2}{2\eta}+\underset{\rA}{\underbrace{\sum_{\tau=1}^t\ExpB{\dotprod{\alpha_\tau\df{x_\tau}-\alpha_\tau\df{x^*}+Y_\tau}{w_\tau-w_{\tau+1}}-\frac{1}{2\eta}\normsq{w_\tau-w_{\tau+1}}}}} \\
+&\underset{\rB}{\underbrace{\sum_{\tau=1}^t\ExpB{\dotprod{\alpha_\tau\df{x^*}}{w_\tau-w_{\tau+1}}}}}+\underset{\rC}{\underbrace{\sum_{\tau=1}^t\ExpB{\dotprod{-\eps_\tau}{w_{\tau+1}-x^*}}}}
\eals
The first inequality is due to \Cref{thm:anytime}, the next one is due to $\tq_\tau=\alpha_\tau\df{x_\tau}+\eps_\tau+Y_\tau$, then we use the fact that $Y_\tau$ is zero mean and is independent of $w_\tau$, and after that we add and subtract $\pm\dotprod{\tq_\tau}{w_{\tau+1}}$, and then use~\Cref{lem:OGD+_Guarantees}.
Next, we reuse the definition of $\tq_\tau$ and move terms together plus adding and subtracting $\pm\alpha_\tau\df{x^*}$.
Next we separately bound the above terms $\rA,\rB,\rC$.
\\\textbf{Bounding $\rA$}
This term can be bounded as follows:
\als
\rA:=&\sum_{\tau=1}^t\ExpB{\dotprod{\alpha_\tau\df{x_\tau}-\alpha_\tau\df{x^*}+Y_\tau}{w_\tau-w_{\tau+1}}-\frac{1}{2\eta}\sum_{\tau=1}^t\normsq{w_\tau-w_{\tau+1}}} \\
\leq&\frac{\eta}{2}\sum_{\tau=1}^t\Exp{\normsq{\alpha_\tau(\df{x_\tau}-\df{x^*})+Y_\tau}}=\frac{\eta}{2}\sum_{\tau=1}^t\alpha_\tau^2\Exp{\normsq{\df{x_\tau}-\df{x^*}}}+\frac{\eta}{2}\sum_{\tau=1}^t\Exp{\normsq{Y_\tau}}
\eals
First, we use Young's inequality, and equality holds since $Y_\tau$ is independent of $x_\tau$, and has zero mean.
\\\textbf{Bounding $\rB$}
This term can be written as follows:
\als
\rB:=&\sum_{\tau=1}^t\ExpB{\dotprod{\alpha_\tau\df{x^*}}{w_\tau-w_{\tau+1}}}=\sum_{\tau=1}^t(\alpha_\tau-\alpha_{\tau-1})\ExpB{\dotprod{\df{x^*}}{w_\tau}}-\alpha_t\ExpB{\dotprod{\df{x^*}}{w_{t+1}}} \\
=&\sum_{\tau=1}^t(\alpha_\tau-\alpha_{\tau-1})\ExpB{\dotprod{\df{x^*}}{w_\tau-w_{t+1}}}\leq\sum_{\tau=1}^t(\alpha_\tau-\alpha_{\tau-1})\norm{\df{x^*}}\ExpB{\norm{w_\tau-w_{t+1}}} \\
\leq&D\norm{\df{x^*}}\sum_{\tau=1}^t(\alpha_\tau-\alpha_{\tau-1})=\alpha_t D\norm{\df{x^*}}=\alpha_t D G^*
\eals
The first equality is rearrangement of the sum while defining $\alpha_0=0$, then we put the last term into the sum, and use Cauchy-Schwartz.
Finally we use the diameter bound and telescope the  sum.
Note that we use the definition $G^*:=\norm{\df{x^*}}$, and that $G^*\in[0,G]$.
\\\textbf{Bounding $\rC$}
This term is bounded as follows:
\als
\rC:=\sum_{\tau=1}^t\ExpB{\dotprod{\eps_\tau}{w_{\tau+1}-x^*}}\leq\sum_{\tau=1}^t\ExpB{\norm{\eps_\tau}\cdot\norm{w_{\tau+1}-x^*}}\leq D\sum_{\tau=1}^t\sqrt{\Exp{\normsq{\eps_\tau}}}
\eals
The first inequality is Cauchy-Schwartz, and the second is using the bounded diameter and Jensen's inequality w.r.t.~concave function $\sqrt{\cdot}$.
In total, we get that:
\al\label{eq:StageMiddle}
\alpha_{1:t}\R(x_t)\leq\frac{D^2}{2\eta}+\frac{\eta}{2}\sum_{\tau=1}^t\alpha_\tau^2\Exp{\normsq{\df{x_\tau}-\df{x^*}}}+\frac{\eta}{2}\sum_{\tau=1}^t\Exp{\normsq{Y_\tau}}+\alpha_t DG^*+D\sum_{\tau=1}^t\sqrt{\Exp{\normsq{\eps_\tau}}}
\eal
Using \Cref{lem:smooth}, together with $\alpha_\tau^2=\tau^2\leq2\frac{\tau(\tau+1)}{2}=2\alpha_{1:\tau}$:
\als
\alpha_\tau^2\Exp{\normsq{\df{x_\tau}-\df{x^*}}}\leq4\alpha_{1:\tau}L\R(x_\tau)
\eals
Using the above and plugging it back into~\Cref{eq:StageMiddle} yields:
\als
\alpha_{1:t}\R(x_t)\leq\frac{D^2}{2\eta}+2\eta L\sum_{\tau=1}^t\alpha_{1:\tau}\R(x_\tau)+\frac{\eta}{2}\sum_{\tau=1}^t\Exp{\normsq{Y_\tau}}+\alpha_tDG^*+D\sum_{\tau=1}^t\sqrt{\Exp{\normsq{\eps_\tau}}}
\eals
We will now increase $t$ to $T$ in the right-hand-side (since the RHS is monotonically non-decreasing with $t$) and use our choice of the learning rate which implies $\eta\leq\frac{1}{4LT}$, to get that for any $t\leq T$,
\als
\alpha_{1:t}\R(x_t)\leq\frac{1}{2T}\sum_{\tau=1}^T\alpha_{1:\tau}\R(x_\tau)+\frac{D^2}{2\eta}+\frac{\eta}{2}\sum_{\tau=1}^T\Exp{\normsq{Y_\tau}}+\alpha_TDG^*+D\sum_{\tau=1}^T\sqrt{\Exp{\normsq{\eps_\tau}}}
\eals
Now, using the \Cref{lem:sum} with:
\als
\A_t=&\alpha_{1:t}\R(x_t) \\
\Bcal=&\frac{D^2}{2\eta}+\frac{\eta }{2}\sum_{\tau=1}^T\Exp{\normsq{Y_\tau}}+\alpha_T DG^*+D\sum_{\tau=1}^T\sqrt{\Exp{\normsq{\eps_\tau}}}
\eals
We finally get:
\als
\alpha_{1:T}\R(x_T)\leq\frac{D^2}{\eta}+\eta\sum_{\tau=1}^T\Exp{\normsq{Y_\tau}}+2\alpha_T DG^*+2D\sum_{\tau=1}^T\sqrt{\Exp{\normsq{\eps_\tau}}}
\eals
We can see that the bound depends on our learning rate $\eta$, the estimation error $\eps_\tau$ and the injected noise $Y_\tau$.

Using \Cref{lem:bound_eps}, and the fact the $\{\eps_{t,i}\}_{i\in[M]}$ are independent given the queries $\{x_\tau\}_{\tau\in[t]}$, since each of them depends on a different dataset, we get that: $\Exp{\normsq{\eps_t}}\leq\frac{\tsigma^2t}{M}$.
Now we can bound:
\als
\sum_{\tau=1}^T\sqrt{\Exp{\normsq{\eps_\tau}}}\leq\sum_{\tau=1}^T\sqrt{\frac{\tsigma^2\tau}{M}}\leq\frac{\tsigma T^{1.5}}{\sqrt{M}}
\eals
The bound on $Y_\tau$ will be different between our two versions.

\subsection{Proof of \Cref{thm:untrust}}
\label{proof:untrust}
Since we pick $\sigma_{t,i}^2=\frac{4S^2T}{\rho^2}$ and $Y_\tau=\frac{1}{M}\sum_{i=1}^MY_{\tau,i}$, then:
\als
\sum_{\tau=1}^T\Exp{\normsq{Y_\tau}}=\frac{1}{M^2}\sum_{\tau=1}^T\sum_{i=1}^M\Exp{\normsq{Y_{\tau,i}}}=\frac{1}{M^2}\sum_{\tau=1}^T\sum_{i=1}^Md\sigma_{t,i}^2=\frac{4S^2T^2d}{\rho^2M}
\eals

When we put our bounds together in~\Cref{eq:FinalBoundAlmostExplicitUntrusted} we get:
\als
\alpha_{1:T}\R(x_T)\leq\frac{D^2}{\eta}+\frac{4\eta S^2T^2d}{\rho^2M}+2\alpha_TDG^*+\frac{2D\tsigma T^{1.5}}{\sqrt{M}}
\eals
Finally, dividing by $\alpha_{1:T}=T(T+1)/2\geq T^2/2$ and recalling our choice $\eta=\min\left\{\frac{\rho D\sqrt{M}}{2ST\sqrt{d}},\frac{1}{4LT}\right\}$, using $\eta\leq\frac{\rho D\sqrt{M}}{2ST\sqrt{d}}, \frac{1}{\eta}\leq\frac{2ST\sqrt{d}}{\rho D\sqrt{M}}+4LT$, and that $\alpha_T=T$, yields the final bound:
\als
\R(x_T)\leq4D\left(\frac{G^*+2LD}{T}+\frac{2S\sqrt{d}}{\rho T\sqrt{M}}+\frac{\tsigma}{\sqrt{TM}}\right)
\eals
Thus concluding the proof.

\subsection{Proof of \Cref{thm:trust}}
\label{proof:trust}
Since we pick $\sigma_t^2=\frac{4S^2T}{\rho^2M^2}$, then:
\als
\sum_{\tau=1}^T\Exp{\normsq{Y_\tau}}=\sum_{\tau=1}^T\sum_{i=1}^Md\sigma_t^2=\frac{4S^2T^2d}{\rho^2M^2}
\eals

When we put our bounds together in~\Cref{eq:FinalBoundAlmostExplicitTrusted} we get:
\als
\alpha_{1:T}\R(x_T)\leq\frac{D^2}{\eta}+\frac{4\eta S^2T^2d}{\rho^2M^2}+2\alpha_TDG^*+\frac{2D\tsigma T^{1.5}}{\sqrt{M}}
\eals
Finally, dividing by $\alpha_{1:T}=T(T+1)/2\geq T^2/2$ and recalling our choice $\eta=\min\left\{\frac{\rho DM}{2ST\sqrt{d}},\frac{1}{4LT}\right\}$, using $\eta\leq\frac{\rho DM}{2ST\sqrt{d}}, \frac{1}{\eta}\leq\frac{2ST\sqrt{d}}{\rho DM}+4LT$, and that $\alpha_T=T$, yields the final bound:
\als
\R(x_T)\leq4D\left(\frac{G^*+2LD}{T}+\frac{2S\sqrt{d}}{\rho TM}+\frac{\tsigma}{\sqrt{TM}}\right)
\eals
Thus concluding the proof.
\section{Review of \citep{private_fed}}
\label{sec:other}

In \citep{private_fed} they propose an algorithm for private federated learning.
In this paper they assume to have $M$ machines, and  $n$ is the size of the dataset used by each machine, and therefore the total dataset is of size $|\S| = nM$.
Moreover, they denote by $R$ to total number of update rounds, and by $K$ the minibatch size used by each machine in every round, totalling $RKM$ gradient computations overall.
In theorem D.2 of this paper we can see that for the convex case of ISRL-DP (untrusted server) they use $R=\frac{\beta D\sqrt{M}}{L}\min\left\{\sqrt{n},\frac{\epsilon_0n}{\sqrt{d}}\right\}+\min\left\{nM,\frac{\epsilon_0^2n^2M}{d}\right\}$ time steps, and batch size of $K\geq\frac{\epsilon_0n}{4\sqrt{2R\ln(2/\delta_0)}}$.
The total number of gradient computations each silo use is $RK$, and the total number of computations across all silos is $RKM$:
\als
RKM\geq&\frac{\epsilon_0nM\sqrt{R}}{4\sqrt{2\ln(2/\delta_0)}}=\frac{\epsilon_0nM}{4\sqrt{2\ln(2/\delta_0)}}\sqrt{\frac{\beta D}{L}\sqrt{nM}\min\left\{1,\frac{\epsilon_0\sqrt{n}}{\sqrt{d}}\right\}+nM\min\left\{1,\frac{\epsilon_0^2n}{d}\right\}} \\
\geq&\frac{\epsilon_0(nM)^{3/2}}{4\sqrt{2\ln(2/\delta_0)}}\min\left\{1,\frac{\epsilon_0\sqrt{n}}{\sqrt{d}}\right\}
\eals
The first inequality is inputting $K$, then we input $R$, and then remove the first term under the square root.
Thus in the prevalent regime where $\epsilon_0\in\Theta(1)$ and $d\in O(n)$, then we can say that the total number of gradient computations is:
\als
RKM\geq\Omega\left((nM)^{3/2}\right)=\Omega\left(|\S|^{3/2}\right)
\eals
Thus \citep{private_fed} has computational complexity that is proportional to $|\S|^{3/2}$.

\end{document}